\def\x{{\mathbf x}}
\def\v{{\mathbf v}}
\def\d{{\mathbf d}}
\def\y{{\mathbf y}}
\def\D{{\mathbf D}}
\def\W{{\mathbf W}}
\def\M{{\mathcal{M}}}
\def\P{{\mathcal{P}}}
\def\b{{\mathbf b}}
\def\S{{\mathcal{S}}}
\def\C{{\mathbf C}}
\def\PP{{\mathbf P}}
\def\gama{{\boldsymbol \gamma}}
\def\Delt{{\boldsymbol \Delta}}
\def\O{{\boldsymbol \Omega}}
\def\lamda{{\boldsymbol \lambda}}
\def\vps{{\bm{\mathbf{\mathcal{E}}}}}
\def\Loi{{\ell_{0,\infty}}}
\def\DCPE{{\text{DCP}_\lamda^{\hspace{0.04cm} \vps}}}
\def\PM{{\P_{\M_\lamda}}}
\def\pp{{\scriptscriptstyle{\PP}}}
\theoremstyle{plain}
\newtheorem{thm}{Theorem} 
\newenvironment{customthm}[1]
{\thm}
{\endthm}
\theoremstyle{definition}
\newtheorem{defn}[thm]{Definition} 
\newtheorem{lemma}{Lemma}
\begin{document}

\title{Multi-Layer Convolutional Sparse Modeling:\\ Pursuit and Dictionary Learning}
%
\date{}
\author{Jeremias~Sulam,~\IEEEmembership{Member, IEEE,}
        Vardan~Papyan,~
        Yaniv~Romano,~
        and~Michael~Elad~\IEEEmembership{Fellow, IEEE} %
        \thanks{\noindent J. Sulam, and M. Elad are with the Computer Science Department, Technion-Israel Institute of Technology. Y. Romano and V. Papyan are with the  Statistics Department of Stanford University. 
        } }

\graphicspath{{./Figures/}}

\IEEEtitleabstractindextext{
\begin{abstract} 
The recently proposed Multi-Layer Convolutional Sparse Coding (ML-CSC) model, consisting of a cascade of convolutional sparse layers, provides a new interpretation of Convolutional Neural Networks (CNNs). Under this framework, the forward pass in a CNN is equivalent to a pursuit algorithm aiming to estimate the nested sparse representation vectors from a given input signal. Despite having served as a pivotal connection between CNNs and sparse modeling, a deeper understanding of the ML-CSC is still lacking. In this work, we propose a sound pursuit algorithm for the ML-CSC model by adopting a projection approach. We provide new and improved bounds on the stability of the solution of such pursuit and we analyze different practical alternatives to implement this in practice. We show that the training of the filters is essential to allow for non-trivial signals in the model, and we derive an online algorithm to learn the dictionaries from real data, effectively resulting in cascaded sparse convolutional layers. Last, but not least, we demonstrate the applicability of the ML-CSC model for several applications in an unsupervised setting, providing competitive results. Our work represents a bridge between matrix factorization, sparse dictionary learning and sparse auto-encoders, and we analyze these connections in detail.
\end{abstract}

\begin{IEEEkeywords}
Convolutional Sparse Coding, Multilayer Pursuit, Convolutional Neural Networks, Dictionary Learning, Sparse Convolutional Filters.
\end{IEEEkeywords}}

\maketitle

\IEEEdisplaynontitleabstractindextext

\IEEEpeerreviewmaketitle


\section{Introduction}

New ways of understanding real world signals, and proposing ways to model their intrinsic properties, have led to improvements in signal and image restoration, detection and classification, among other problems. Little over a decade ago, sparse representation modeling brought about the idea that natural signals can be (well) described as a linear combination of only a few building blocks or components, commonly known as atoms \cite{Bruckstein2009}. Backed by elegant theoretical results, this model led to a series of works dealing either with the problem of the pursuit of such decompositions, or with the design and learning of better atoms from real data \cite{Rubinstein2010_dict}. The latter problem, termed dictionary learning, empowered sparse enforcing methods to achieve remarkable results in many different fields from signal and image processing \cite{Sulam2014,Romano2014,Mairal2009} to machine learning \cite{Jiang2013,patel2014dictionaries,shrivastava2014multiple}.

Neural networks, on the other hand, were introduced around forty years ago and were shown to provide powerful classification algorithms through a series of function compositions \cite{lecun1990handwritten,rumelhart1988learning}.  It was not until the last half-decade, however, that through a series of incremental modifications these methods were boosted to become the state-of-the-art machine learning tools for a wide range of problems, and across many different fields \cite{lecun2015deep}. For the most part, the development of new variants of deep convolutional neural networks (CNNs) has been driven by trial-and-error strategies and a considerable amount of intuition. 

Withal, a few research groups have begun providing theoretical justifications and analysis strategies for CNNs from very different perspectives. For instance, by employing wavelet filters instead of adaptive ones, the work by Bruna and Mallat \cite{bruna2013invariant} demonstrated how \emph{scattering networks} represent shift invariant analysis operators that are robust to deformations (in a Lipschitz-continuous sense). The inspiring work of \cite{patel2015probabilistic} proposed a generative Bayesian model, under which typical deep learning architectures perform an inference process. In \cite{cohen16Shashua}, the authors proposed a hierarchical tensor factorization analysis model to analyze deep CNNs. Fascinating connections between sparse modeling and CNN have also been proposed. In \cite{gregor2010learning}, a neural network architecture was shown to be able to learn iterative shrinkage operators, essentially \emph{unrolling} the iterations of a sparse pursuit. Building on this interpretation, the work in \cite{xin2016maximal} further showed that CNNs can in fact improve the performance of sparse recovery algorithms. 

A precise connection between sparse modeling and CNNs was recently presented in \cite{Papyan2016convolutional}, and its contribution is centered in defining the Multi-Layer Convolutional Sparse Coding (ML-CSC) model. When deploying this model to real signals, compromises were made in way that each layer is only \emph{approximately} explained by the following one. With this relaxation in the pursuit of the convolutional representations, the main observation of that work is that the inference stage of CNNs -- nothing but the forward-pass -- can be interpreted as a very crude pursuit algorithm seeking for unique sparse representations. This is a useful perspective as it provides a precise optimization objective which, it turns out, CNNs attempt to minimize.  

The work in \cite{Papyan2016convolutional} further proposed improved pursuits for approximating the sparse representations of the network, or feature maps, such as the Layered Basis Pursuit algorithm. Nonetheless, as we will show later, neither this nor the forward pass serve the ML-CSC model exactly, as they do not provide signals that comply with the model assumptions. In addition, the theoretical guarantees accompanying these layered approaches suffer from bounds that become looser with the network's depth. The lack of a suitable pursuit, in turn, obscures how to properly sample from the ML-CSC model, and how to train the dictionaries from real data.

In this work we undertake a fresh study of the ML-CSC and of pursuit algorithms for signals in this model. Our contributions will be guided by addressing the following questions:

\begin{enumerate}

	\item Given proper convolutional dictionaries, how can one project\footnote{By projection, we refer to the task of getting the closest signal to the one given that obeys the model assumptions.} signals onto the ML-CSC model?

	\item When will the model allow for \emph{any} signal to be expressed in terms of nested sparse representations? In other words, is the model empty?

	\item What conditions should the convolutional dictionaries satisfy? and how can we adapt or learn them to represent real-world signals?

	\item How is the learning of the ML-CSC model related to traditional CNN and dictionary learning algorithms? 

	\item What kind of performance can be expected from this model?

\end{enumerate}

The model we analyze in this work is related to several recent contributions, both in the realm of sparse representations and deep-learning. On the one hand, the ML-CSC model is tightly connected to dictionary constrained learning techniques, such as Chasing Butterflies approach \cite{Lemagoarou15}, fast transform learning \cite{Chabiron2013}, Trainlets \cite{Sulam2016}, among several others. On the other hand, and because of the unsupervised flavor of the learning algorithm, our work shares connections to sparse auto-encoders \cite{ng2011sparse}, and in particular to the k-sparse \cite{makhzani2013k} and winner-take-all versions \cite{makhzani2015winner}.

In order to progressively answer the questions posed above, we will first review the ML-CSC model in detail in Section \ref{sec:Background}. We will then study how signals can be projected onto the model in Section \ref{sec:Projection}, where we will analyze the stability of the projection problem and provide theoretical guarantees for practical algorithms. We will then propose a learning formulation in Section \ref{sec:Learning}, which will allow, for the first time, to obtain a trained ML-CSC model from real data while being perfectly faithful to the model assumptions. In this work we restrict our study to the learning of the model in an unsupervised setting. This approach will be further demonstrated on signal approximation and unsupervised learning applications in Section \ref{sec:experiments}, before concluding in Section \ref{sec:conclusions}.


\section{Background}
\label{sec:Background}

\subsection{Convolutional Sparse Coding}

\begin{figure}
\begin{center}
	\includegraphics[trim = 0 0 0 0, width = .48\textwidth]{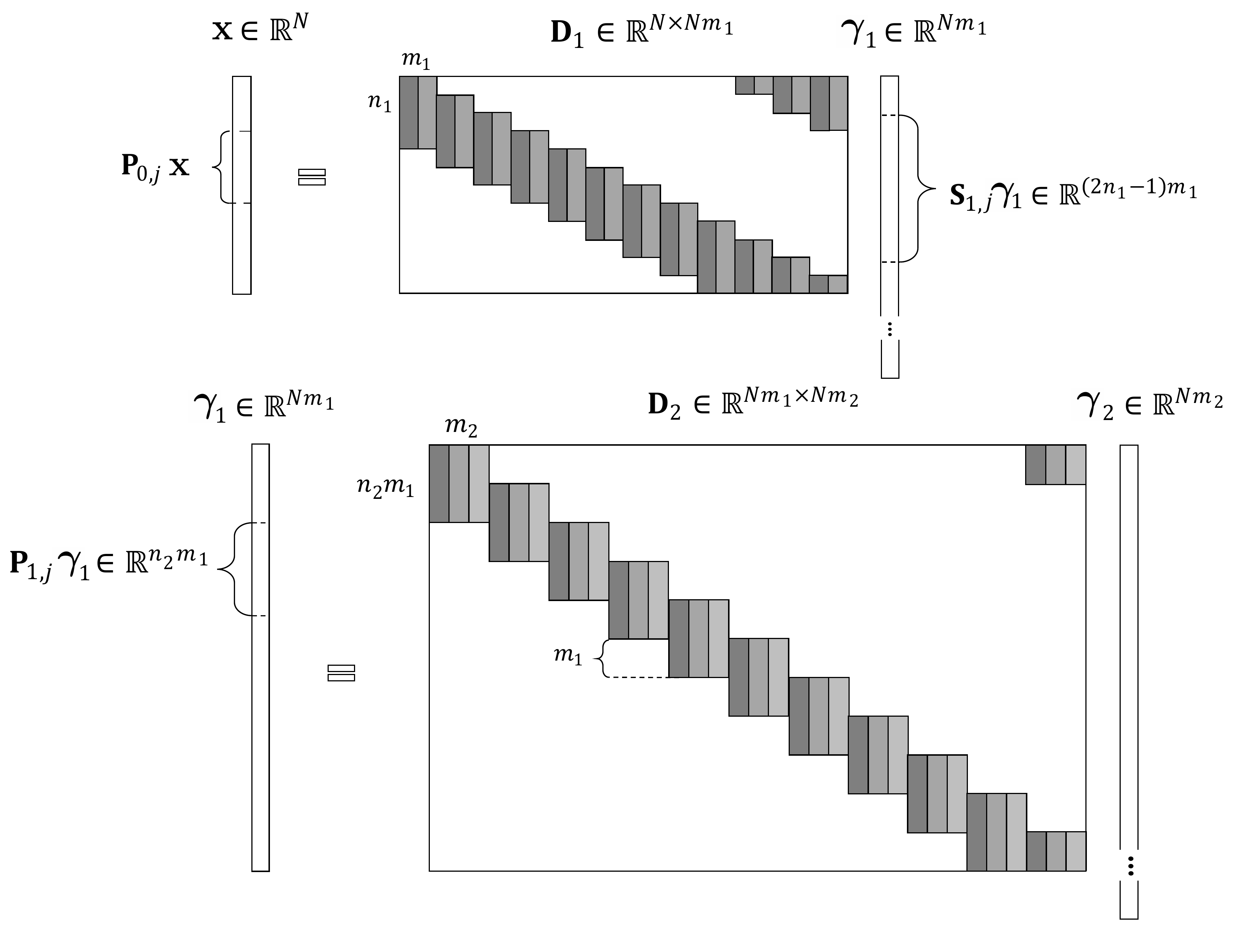}
	\caption{The CSC model (top), and its ML-CSC extension by imposing a similar model on $\gama_1$ (bottom). 
	}
	\label{Fig:ML-CSCmodel}
\end{center}
\end{figure}

The Convolutional Sparse Coding (CSC) model assumes a signal $\x \in \mathbb{R}^N$ admits a decomposition as $\D_1\gama_1$, where $\gama_1 \in \mathbb{R}^{Nm_1}$ is sparse and $\D_1 \in \mathbb{R}^{N\times Nm_1}$ has a convolutional structure. More precisely, this dictionary consists of $m_1$ local $n_1$-dimensional filters at every possible location (Figure \ref{Fig:ML-CSCmodel} top).
An immediate consequence of this model assumption is the fact that each $j^{th}$ \emph{patch} $\mathbf{P}_{0,j}\x \in \mathbb{R}^{n_1}$ from the signal $\x$ can be expressed in terms of a shift-invariant local model corresponding to a \emph{stripe} from the global sparse vector, $\mathbf{S}_{1,j}\gama_1 \in \mathbb{R}^{(2n_1-1)m_1}$. From now on, and for the sake of simplicity, we will drop the first index on the stripe and patch extraction operators, simply denoting the $j^{th}$ stripe from $\gama_1$ as $\mathbf{S}_j\gama_1$. 

In the context of CSC, the sparsity of the representation is better captured through the $\Loi$ pseudo-norm \cite{WorkingLocallyThinkingGlobally}. This measure, as opposed to the traditional $\ell_0$, provides a notion of local sparsity and it is defined by the maximal number of non-zeros in a stripe from $\gama$. Formally,
\begin{equation}
	\|\gama\|^s_{0,\infty} = \underset{i}{\max}\ \| \mathbf{S}_i \gama \|_0.
\end{equation}
We kindly refer the reader to \cite{WorkingLocallyThinkingGlobally} for a more detailed description of this model, as well as extensive theoretical guarantees associated with the model stability and the success of pursuit algorithms serving it. 

This model presents several characteristics that make it relevant and interesting. On the one hand, CSC provides a systematic and formal way to develop and analyze very popular and successful patch-based algorithms in signal and image processing \cite{WorkingLocallyThinkingGlobally}. From a more practical perspective, on the other hand, the convolutional sparse model has recently received considerable attention in the computer vision and machine learning communities. Solutions based on the CSC have been proposed for detection \cite{sermanet2013pedestrian}, compressed sensing \cite{li2013convolutional} texture-cartoon separation \cite{zhang2016convolutional,zhang2018convolutional}, inverse problems \cite{Papyan_2017_ICCV,Heide2015,choudhury2017consensus} and feature learning \cite{henaff2011unsupervised,szlam2011structured}, and different convolutional dictionary learning algorithms have been proposed and analyzed \cite{Papyan_2017_ICCV,Wohlberg2016,liu2017online}. Interestingly, this model has also been employed in a hierarchical way \cite{zeiler2010deconvolutional,szlam2010convolutional,kavukcuoglu2010learning,he2014unsupervised} mostly following intuition and imitating successful CNNs' architectures. This connection between convolutional features and multi-layer constructions was recently made precise in the form of the Multi-Layer CSC model, which we review next.

\subsection{Multi Layer CSC}
\label{sec:MultiLayerCSC}
The Multi-Layer Convolutional Sparse Coding (ML-CSC) model is a natural extension of the CSC described above, as it assumes that a signal can be expressed by sparse representations at different layers in terms of nested convolutional filters. Suppose $\x = \D_1\gama_1$, for a convolutional dictionary $\D_1 \in \mathbb{R}^{N\times Nm_1}$ and an $\Loi$-sparse representation $\gama_1 \in \mathbb{R}^{Nm_1}$.  One can cascade this model by imposing a similar assumption on the representation $\gama_1$, i.e., $\gama_1 = \D_2\gama_2$, for a corresponding convolutional dictionary $\D_2\in\mathbb{R}^{Nm_1\times Nm_2}$ with $m_2$ local filters and a $\Loi$-sparse $\gama_2$, as depicted in Figure \ref{Fig:ML-CSCmodel}. In this case, $\D_2$ is a also a convolutional dictionary with local filters skipping $m_1$ entries at a time\footnote{This construction provides operators that are convolutional in the space domain, but not in the channel domain -- just as for CNNs.} -- as there are $m_1$ \emph{channels} in the representation $\gama_1$. 

Because of this multi-layer structure, vector $\gama_1$ can be viewed both as a sparse representation (in the context of $\x=\D_1\gama_1$) or as a signal (in the context of $\gama_1 = \D_2\gama_2$). Thus, one one can refer to both its stripes (looking backwards to patches from $\x$) or its patches (looking forward, corresponding to stripes of $\gama_2$). In this way, when analyzing the ML-CSC model we will not only employ the $\ell_{0,\infty}$ norm as defined above, but we will also leverage its \emph{patch} counterpart, where the maximum is taken over all patches from the sparse vector by means of a patch extractor operator $\mathbf{P}_i$. In order to make their difference explicit, we will denote them as $\|\gama\|^s_{0,\infty}$ and $\|\gama\|^p_{0,\infty}$ for stripes and patches, respectively. In addition, we will employ the $\ell_{2,\infty}$ norm version, naturally defined as $\|\gama\|^s_{2,\infty} = \underset{i}{\max}\ \|\mathbf{S}_i\gama \|_2$, and analogously for patches.	

We now formalize the model definition:

\begin{defn}{ML-CSC model:}\\
Given a set of convolutional dictionaries $\{\D_i\}_{i=1}^L$ of appropriate dimensions, a signal $\x(\gama_i) \in \mathbb{R}^{N}$ admits a representation in terms of the ML-CSC model, i.e. $\x(\gama_i) \in \M_{\lamda}$, if
	\begin{align*}
	 \x = \D_1 \gama_1, &  \quad \|\gama_1\|^s_{0,\infty} \leq \lambda_1, \\
	\gama_1 = \D_2 \gama_2, & \quad \|\gama_2\|^s_{0,\infty} \leq \lambda_2 , \\
	\phantom{..} \vdots  \\
	\gama_{L-1} = \D_L \gama_L, & \quad \|\gama_L\|^s_{0,\infty} \leq \lambda_L.
	\end{align*}
\end{defn}

Note that $\x(\gama_i) \in \M_\lamda$ can also be expressed as $\x = \D_1\D_2\dots\D_L \gama_L$. We refer to $\D^{(i)}$ as the \emph{effective} dictionary at the $i^{th}$ level, i.e., $\D^{(i)} = \D_1\D_2\dots\D_i$. This way, one can concisely write
\begin{equation}
	\x = \D^{(i)}\gama_i, \ 1\leq i\leq L.
\end{equation}

Interestingly, the ML-CSC can be interpreted as a special case of a CSC model: one that enforces a very specific structure on the intermediate representations. We make this statement precise in the following Lemma:

\begin{lemma}{} \label{lemma:MLCSCisCSC}
	Given the ML-CSC model described by the set of convolutional dictionaries $\{\D_i\}_{i=1}^L$, with filters of spatial dimensions $n_i$ and channels $m_i$, any dictionary $\D^{(i)} = \D_1 \D_2 \dots \D_i$ is a convolutional dictionary with $m_i$ local atoms of dimension $n_i^{\text{eff}} = \sum_{j=1}^{i} n_j - (i-1)$. In other words, the ML-CSC model is a structured global convolutional model.
\end{lemma}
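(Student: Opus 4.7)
The plan is to proceed by induction on $i$, exploiting the elementary fact that a composition of two convolutional operators is again convolutional with an enlarged effective filter support. The base case $i=1$ is immediate: $\D^{(1)} = \D_1$ is convolutional by assumption, with $m_1$ atoms of spatial size $n_1 = n_1^{\text{eff}}$.

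For the inductive step, suppose $\D^{(i-1)}$ is convolutional with $m_{i-1}$ atoms of spatial size $n_{i-1}^{\text{eff}} = \sum_{j=1}^{i-1} n_j - (i-2)$, and consider $\D^{(i)} = \D^{(i-1)}\D_i$. First I would observe that every column (atom) of $\D^{(i)}$ is a linear combination of columns of $\D^{(i-1)}$, with weights given by the corresponding column of $\D_i$. Since $\D_i$ is convolutional with $m_i$ filters skipping $m_{i-1}$ entries per spatial shift, the support of any one of its columns is contained in a block of $n_i$ consecutive spatial locations across all $m_{i-1}$ channels. Each non-zero weight therein activates a spatial shift of the corresponding channel atom of $\D^{(i-1)}$, so the resulting atom of $\D^{(i)}$ is a superposition of atoms of spatial extent $n_{i-1}^{\text{eff}}$ shifted over a window of $n_i$ positions. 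A direct support count then gives spatial extent at most $n_{i-1}^{\text{eff}} + n_i - 1 = \sum_{j=1}^{i} n_j - (i-1) = n_i^{\text{eff}}$, which matches the claim.

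Next I would verify the shift-equivariance needed for the composition to inherit the convolutional structure. Translating a column of $\D_i$ by one spatial unit corresponds to shifting its $m_{i-1}$-channel support by $m_{i-1}$ entries; applying $\D^{(i-1)}$ to such a shifted input yields a spatially translated output precisely because $\D^{(i-1)}$ is, by induction, a convolutional operator. Consequently, the $Nm_i$ columns of $\D^{(i)}$ partition into $m_i$ groups of $N$ columns, each group consisting of all integer spatial translations of a single atom of extent $n_i^{\text{eff}}$. This is the very definition of a convolutional dictionary with $m_i$ local filters, completing the induction.

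The main obstacle is purely bookkeeping: keeping the spatial-shift indexing cleanly separated from the channel indexing inside $\D_i$, and making explicit that the stride of $m_{i-1}$ in $\D_i$ corresponds to a unit spatial shift at the $\gama_{i-1}$ level. Once that correspondence is made precise, shift-equivariance of the composition follows from the shift-equivariance of each factor, and the effective support formula reduces to the standard identity that convolving filters of lengths $a$ and $b$ produces a filter of length $a+b-1$.
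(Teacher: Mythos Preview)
Your proof is correct and follows essentially the same approach as the paper: both argue inductively that the composition of convolutional operators remains convolutional, with the effective filter length obeying the recursion $n_i^{\text{eff}} = n_{i-1}^{\text{eff}} + n_i - 1$ and the atom count given by the $m_i$ output channels. The only cosmetic difference is that the paper phrases the argument via the decomposition of each $\D_i$ into banded circulant blocks (showing each $\D^{(i-1)}\C^{(i)}_j$ is again banded circulant), whereas you phrase it directly in terms of column supports and shift-equivariance; these are equivalent characterizations of the same structure.
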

\noindent
The proof of this lemma is rather straight forward, and we include it in the Supplementary Material \ref{app:MLCSCisCSC}.
Note that what was denoted as the effective dimension at the $i^{th}$ layer is nothing else than what is known in the deep learning community as the \emph{receptive field} of a filter at layer $i$. Here, we have made this concept precise in the context of the ML-CSC model.

As it was presented, the convolutional model assumes that every $n$-dimensional atom is located at every possible location, which implies that the filter is shifted with strides of $s=1$. An alternative, which effectively reduces the redundancy of the resulting dictionary, is to consider a stride greater than one. In such case, the resulting dictionary is of size $N\times Nm_1/s$ for one dimensional signals, and $N\times N m_1 / s^2$ for images. This construction, popular in the CNN community, does not alter the effective size of the filters but rather decreases the length of each stripe by a factor of $s$ in each dimension. In the limit, when $s = n_1$, one effectively considers non-overlapping blocks and the stripe will be of length\footnote{When $s=n_1$, the system is no longer shift-invariant, but rather invariant with a shift of $n$ samples.} $m_1$ -- the number of local filters. Naturally, one can also employ $s>1$ for any of the multiple layers of the ML-CSC model. We will consider $s=1$ for all layers in our derivations for simplicity.

\begin{figure}
\begin{center}
	\includegraphics[trim = 0 10 0 10, width = .46\textwidth]{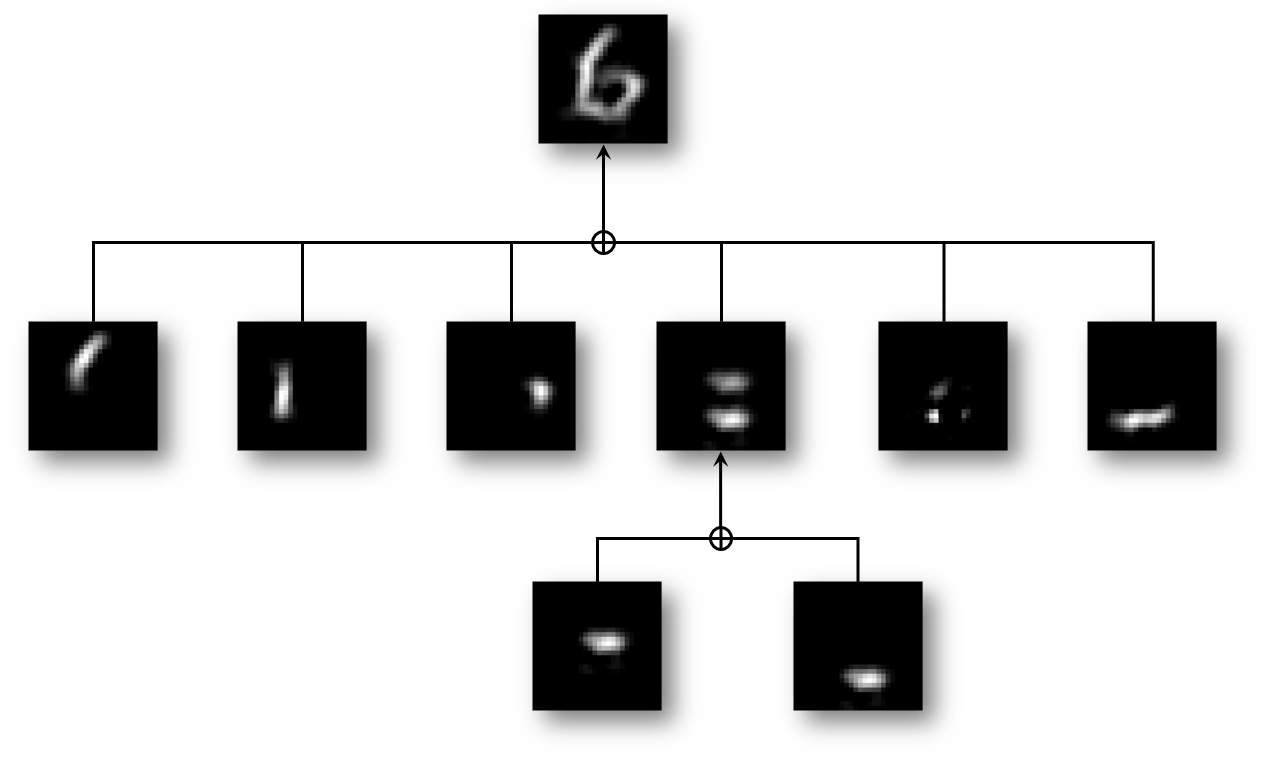}
	\caption{From atoms to molecules: Illustration of the ML-CSC model for a number 6. Two local convolutional atoms (bottom row) are combined to create slightly more complex structures -- molecules -- at the second level, which are then combined to create the global atom representing, in this case, a digit.
	}
	\label{Fig:atomDecomposition}
\end{center}
\end{figure}

The ML-CSC imposes a unique structure on the global dictionary $\D^{(L)}$, as it provides a multi-layer linear composition of simpler structures. In other words, $\D_1$ contains (small) local $n_1$-dimensional atoms. The product $\D_1\D_2$ contains in each of its columns a linear combination of atoms from $\D_1$, merging them to create molecules. Further layers continue to create more complex constructions out of the simpler convolutional building blocks. We depict an example of such decomposition in Figure \ref{Fig:atomDecomposition} for a $3^{rd}$-layer convolutional atom of the digit ``6''. While the question of how to obtain such dictionaries will be addressed later on, let us make this illustration concrete: consider this atom to be given by $\x_0 = \D_1\D_2\d_3$, where $\d_3$ is sparse, producing the upper-most image $\x_0$. Denoting by $\mathcal{T}(\d_3) = Supp(\d_3)$, this atom can be equally expressed as 
\begin{equation}
	\x_0 = \D^{(2)}\d_3 = \sum_{j\in \mathcal{T}(\d_3)} \d^{(2)}_j d^j_{3}.
\end{equation}
In words, the effective atom is composed of \emph{a few} elements from the effective dictionary $\D^{(2)}$. These are the building blocks depicted in the middle of Figure \ref{Fig:atomDecomposition}. Likewise, focusing on the fourth of such atoms, $\d^{(2)}_{j_4} = \D_1 \d_{2,j_4}$. In this particular case, $\|\d_{2,j_4}\|_0 = 2$, so we can express 
$\d^{(2)}_{j_4} =  \d^{(1)}_{i_1} d^{i_1}_{2,j_1} + \d^{(1)}_{i_2} d^{i_2}_{2,j_1}$.
	%
These two atoms from $\D_1$ are precisely those appearing in the bottom of the decomposition.



\subsection{Pursuit in the noisy setting}

Real signals might contain noise or deviations from the above idealistic model assumption, preventing us from enforcing the above model exactly. Consider the scenario of acquiring a signal $\y = \x + \v$, where $\x \in \mathcal{M}_\lamda$ and $\v$ is a nuisance vector of bounded energy, $\|\v\|_2 \leq \mathcal{E}_0$. In this setting, the objective is to estimate all the representations $\gama_i$ which explain the measurements $\y$ up to an error of $\mathcal{E}_0$. 
In its most general form, this pursuit is represented by the Deep Coding Problem ($\DCPE$), as introduced in \cite{Papyan2016convolutional}:

\begin{defn}{$\DCPE$ Problem:}\\
	For a global signal $\y$, a set of convolutional dictionaries $\{ \D_i \}_{i=1}^L$, and vectors $\lamda$ and $\vps$:
	\begin{align*}
	 (\DCPE): \quad \quad \text{find} \quad \{\gama_i\}_{i=1}^{L} & \quad \text{ s.t. } & \\[.1cm]
	\| \y - \D_1 \gama_1\|_2 & \leq \mathcal{E}_0, & \qquad \| \gama_1 \|^s_{0,\infty}  & \leq \lambda_1 &\\
	\| \gama_1 - \D_2 \gama_2 \|_2 & \leq \mathcal{E}_1,& \qquad  \| \gama_2 \|^s_{0,\infty}  & \leq \lambda_2 &\\
	 &\phantom{..} \vdots & \phantom{..} &\vdots  \\
	\| \gama_{L-1} - \D_L \gama_L \|_2  &\leq \mathcal{E}_{L-1},& \qquad \| \gama_{L} \|^s_{0,\infty} &\leq \lambda_L &
	\end{align*}
	where $\lambda_i$ and $\mathcal{E}_i$ are the $i^{th}$ entries of $\lamda$ and $\vps$, respectively.
\end{defn}

{
The solution to this problem was shown to be stable in terms of a bound on the $\ell_2$-distance between the estimated representations $\hat{\gama}_i$ and the true ones, $\gama_i$. These results depend on the characterization of the dictionaries through their mutual coherence, $\mu(\D)$, which measures the maximal normalized correlation between atoms in the dictionary. Formally, assuming the atoms are normalized as $\|\d_i\|_2 = 1\ \forall i$, this measure is defined as 
\begin{equation}
	\mu(\D) = \underset{i\neq j}{\max}\ | \d_i^T\d_j|.
\end{equation}
Relying on this measure, Theorem 5 in \cite{Papyan2016convolutional} shows that given a signal $\x(\gama_i)\in\PM$ contaminated with noise of known energy $\mathcal{E}^2_0$, if the representations satisfy the sparsity constraint
\begin{equation}
	\| \gama_i \|^s_{0,\infty} < \frac{1}{2} \left( 1 + \frac{1}{\mu(\D_i)} \right),
\end{equation}
then the solution to the $\DCPE$ given by $\{\hat{\gama}_i\}_{i=1}^L$ satisfies
\begin{equation}
	\| \gama_i- \hat{\gama}_i \|_2^2 \leq 4 {\mathcal{E}_0}^2 \prod_{j=1}^{i} \frac{4^{i-1}}{1-(2\|\gama_j\|^s_{0,\infty}-1)\mu(\D_j)}.
\end{equation}
\noindent
In the particular instance of the $\DCPE$ where $\mathcal{E}_i = 0$ for $1\leq i \leq L-1$, the above bound can be made tighter by a factor of $4^{i-1}$ while preserving the same form.

These results are encouraging, as they show for the first time stability guarantees for a problem for which the forward pass provides an approximate solution. More precisely, if the above model deviations are considered to be greater than zero ($\mathcal{E}_i > 0$) several layer-wise algorithms, including the forward pass of CNNs, provide approximations to the solution of this problem \cite{Papyan2016convolutional}. }
We note two remarks about these stability results:
\begin{enumerate}
	\item The bound increases with the number of layers or the depth of the network. This is a direct consequence of the layer-wise relaxation in the above pursuit, which causes these discrepancies to accumulate over the layers.

	\item Given the underlying signal $\x(\gama_i) \in \M_{\lamda}$, with representations $\{\gama_i\}_{i=1}^L$, this problem searches for their corresponding estimates $\{\hat{\gama}_i\}_{i=1}^L$. However, because at each layer $\| \hat{\gama}_{i-1} - \D_i \hat{\gama}_i \|_2 > 0$, this problem \emph{does not} provide representations for a signal in the model. In other words, $\hat{\x} \neq \D_1 \hat{\gama}_1$, $\hat{\gama}_1 \neq \D_2 \hat{\gama}_2$, and generally $\hat{\x} \notin \M_{\lamda}$.
\end{enumerate}


\section{A Projection Alternative}
\label{sec:Projection}

In this section we provide an alternative approach to the problem of estimating the underlying representations $\gama_i$ under the same noisy scenario of $\y = \x(\gama_i) + \v$. In particular, we are interested in projecting the measurements $\y$ onto the set $\M_\lamda$. 
Consider the following projection problem:

\begin{defn}{ML-CSC Projection $\P_{\M_\lamda}$:}\\
	For a signal $\y$ and a set of convolutional dictionaries $\{ \D_i \}_{i=1}^L$, define the Multi-Layer Convolutional Sparse Coding projection as:
	\begin{equation}
	 (\PM): \quad \min_{\{\gama_i\}_{i=1}^L} \quad \| \y - \x(\gama_i) \|_2 \quad \text{ s.t. } \quad \x(\gama_i) \in \M_\lamda.
	\label{Eq:PMproblem}
	\end{equation}
\end{defn}
\noindent
Note that this problem differs from the $\DCPE$ counterpart in that we seek for a signal close to $\y$, whose representations $\gama_i$ give rise to $\x(\gama_i) \in \M_{\lamda}$. This is more demanding (less general) than the formulation in the $\DCPE$. 
Put differently, the $\PM$ problem can be considered 
as a special case of the $\DCPE$ where model deviations are allowed only at the outer-most level. 
Recall that the theoretical analysis of the $\DCPE$ problem indicated that the error thresholds should increase with the layers. Here, the $\PM$ problem suggests a completely different approach.

\subsection{Stability of the projection $\PM$}

Given $\y = \x(\gama_i) + \v$, one can seek for the underlying representations $\gama_i$ through either the $\DCPE$ or $\PM$ problem. In light of the above discussion and the known stability result for the $\DCPE$ problem, how close will the solution of the $\PM$ problem be from the true set of representations? The answer is provided through the following result.



\begin{thm}{ Stability of the solution to the $\PM$ problem:} \label{Thm:GlobalStability} \\
	Suppose $\x(\gama_i) \in \M_\lamda$ is observed through $\y = \x+ \v$, where $\v$ is a bounded noise vector, $\|\v\|_2 \leq \mathcal{E}_0$, and 
	$\|\gama_i\|^s_{0,\infty} = \lambda_i < \frac{1}{2}\left(1+\frac{1}{\mu(\D^{(i)})}\right)$, for $1\leq i \leq L$. Consider the set $\{\hat{\gama}_i\}_{i=1}^{L}$ to be the solution of the $\PM$ problem. Then,
	\begin{equation} \label{Eq:DCPEStability}
	\| \gama_i- \hat{\gama}_i \|_2^2 \leq \frac{4\mathcal{E}_{0}^2}{1-(2\|\gama_{i}\|^s_{0,\infty}-1)\mu(\D^{(i)})}.
	\end{equation}
\end{thm}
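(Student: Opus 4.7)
The plan is to reduce the multi-layer stability question to a sequence of independent single-layer CSC stability problems, one per layer, by invoking Lemma \ref{lemma:MLCSCisCSC}. That lemma tells us each effective dictionary $\D^{(i)} = \D_1 \D_2 \cdots \D_i$ is itself convolutional, so the coherence-based stability theory already developed for the CSC model (e.g.\ in \cite{WorkingLocallyThinkingGlobally,Papyan2016convolutional}) applies essentially verbatim with $\D^{(i)}$ playing the role of the dictionary. In particular, no error needs to be propagated across layers.

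Concretely, for each fixed layer $i$ I would argue that both the true $\gama_i$ and the $\PM$ solution $\hat{\gama}_i$ are $\mathcal{E}_0$-approximate, $\lambda_i$-sparse representations of $\y$ in $\D^{(i)}$. Because $\x \in \M_\lamda$, the underlying signal is itself feasible for $\PM$, so the optimizer obeys $\|\y - \hat{\x}\|_2 \le \|\y - \x\|_2 \le \mathcal{E}_0$; and $\hat{\x} \in \M_\lamda$ forces $\hat{\x} = \D^{(i)} \hat{\gama}_i$ with $\|\hat{\gama}_i\|^s_{0,\infty} \le \lambda_i$. The triangle inequality then gives $\|\D^{(i)}(\gama_i - \hat{\gama}_i)\|_2 \le 2\mathcal{E}_0$, while stripe-wise support inclusion (the support of a difference sits inside the union) yields $\|\gama_i - \hat{\gama}_i\|^s_{0,\infty} \le 2\lambda_i$. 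Feeding this into the Gershgorin-type lower frame bound for convolutional dictionaries, $(1 - (s-1)\mu(\D))\|\cdot\|_2^2 \le \|\D\,\cdot\|_2^2$ for any vector with stripe sparsity $s$, taken with $s = 2\lambda_i$, produces
\begin{equation*}
\bigl(1 - (2\lambda_i - 1)\mu(\D^{(i)})\bigr)\,\|\gama_i - \hat{\gama}_i\|_2^2 \,\le\, \|\D^{(i)}(\gama_i - \hat{\gama}_i)\|_2^2 \,\le\, 4\mathcal{E}_0^2,
\end{equation*}
which rearranges into exactly the claimed bound. The hypothesis $\lambda_i < \tfrac{1}{2}(1 + 1/\mu(\D^{(i)}))$ is precisely what keeps the denominator strictly positive, so the division is legitimate.

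The crux of the argument, and where the real content lies, is in justifying the use of the stripe-based coherence bound on $\D^{(i)}$: a generic product of convolutional matrices need not preserve the local stripe geometry on which the $\Loi$ norm and the Gershgorin estimate rely. Lemma \ref{lemma:MLCSCisCSC} is exactly what secures this — $\D^{(i)}$ is shown to be convolutional with an explicit effective filter size $n_i^{\text{eff}}$ — so the entire single-layer CSC apparatus transfers without modification. Unlike the $\DCPE$ bound, which incurs a multiplicative $4^{i-1}$ blowup because layer-wise relaxations compound with depth, here the error is governed by a single coherence factor per layer, explaining both the cleaner form of the bound and the absence of depth-dependent degradation.
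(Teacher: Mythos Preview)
Your proposal is correct and follows essentially the same route as the paper's own proof: both observe that the optimizer $\hat{\x}$ satisfies $\|\y-\hat{\x}\|_2\le\mathcal{E}_0$ by feasibility of $\x$, invoke Lemma~\ref{lemma:MLCSCisCSC} to certify that each $\D^{(i)}$ is a bona fide convolutional dictionary, and then apply the single-layer CSC stability result from \cite{WorkingLocallyThinkingGlobally} at each layer independently. You have spelled out the Gershgorin-type lower frame bound explicitly where the paper simply cites \cite{WorkingLocallyThinkingGlobally}, but the argument is otherwise identical.
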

For the sake of brevity, we include the proof of this claim in the Supplementary Material \ref{app:StabilityforPM}. However, we note a few remarks:
\begin{enumerate}

	\item The obtained bounds are not cumulative across the layers. In other words, they do not grow with the depth of the network.
 
	\item Unlike the stability result for the $\DCPE$ problem, the assumptions on the sparse vectors $\gama_i$ are given in terms of the mutual coherence of the effective dictionaries $\D^{(i)}$. Interestingly enough, we will see in the experimental section that one can in fact have that $\mu(\D^{(i-1)}) > \mu(\D^{(i)})$ in practice; i.e., the effective dictionary becomes incoherent as it becomes deeper. Indeed, the deeper layers provide larger atoms with correlations that are expected to be lower than the inner products between two small local (and overlapping) filters.

	\item While the conditions imposed on the sparse vectors $\gama_i$ might seem prohibitive, one should remember that this follows from a worst case analysis. Moreover, one can effectively construct analytic nested convolutional dictionaries with small coherence measures, as shown in \cite{Papyan2016convolutional}.

\end{enumerate}

Interestingly, one can also formulate bounds for the stability of the solution, i.e. \mbox{$\|\gama_i - \hat{\gama}_i\|_2^2$}, which are the tightest for the inner-most layer, and then increase as one moves to shallower layers -- precisely the opposite behavior of the solution to the $\DCPE$ problem. This result, however, provides bounds that are generally looser than the one presented in the above theorem, and so we defer this to the Supplementary Material.

\subsection{Pursuit Algorithms}
\label{sec:Pursuit_Algorithms}

We now focus on the question of how one can solve the above problems in pracice.
As shown in \cite{Papyan2016convolutional}, one can approximate the solution to the $\DCPE$ in a layer-wise manner, solving for the sparse representations $\hat{\gama}_i$ progressively from $i=1,\dots,L$. Surprisingly, the Forward Pass of a CNN \emph{is} one such algorithm, yielding stable estimates. 
The Layered BP algorithm was also proposed, where each representation $\hat{\gama}_i$ is sparse coded (in a Basis Pursuit formulation) given the previous representation $\hat{\gama}_{i-1}$ and dictionary $\D_i$. As solutions to the $\DCPE$ problem, these algorithms inherit the layer-wise relaxation referred above, which causes the theoretical bounds to increase as a function of the layers or network depth.


Moving to the variation proposed in this work, how can one solve the $\PM$ problem in practice? Applying the above layer-wise pursuit is clearly not an option, since after obtaining a necessarily distorted estimate $\hat{\gama}_1$ we cannot proceed with equalities for the next layers, as $\gama_1$ does not necessarily have a perfectly sparse representation with respect to $\D_2$. Herein we present a simple approach based on a global sparse coding solver which yields provable stable solutions. 
\begin{algorithm}[h]
\setstretch{1.5}
	\textbf{Input:} $\y,\{\D_i\},k$\;

	$\hat{\gama}_L \leftarrow \text{Pursuit}(\y,\D^{(L)},k) $\;
	
	\For{$j = L,\dots,1$}{
		$\hat{\gama}_{j-1} \leftarrow \D_j\hat{\gama}_j $
		}

	\Return $\{\hat{\gama}_i\}$\;
	\caption{ML-CSC Pursuit}
	\label{Alg:MulilayerPursuit}
\end{algorithm}

Consider Algorithm \ref{Alg:MulilayerPursuit}. This approach circumvents the problem of sparse coding the intermediate features while guaranteeing their exact expression in terms of the following layer. This is done by first running a Pursuit for the deepest representation through an algorithm which provides an approximate solution to the following problem:
\begin{equation}\label{eq:PursuitDeepest}
	\underset{\gama}{\min}\ \|\y - \D^{(L)} \gama \|_2^2\ \text{ s.t. } \|\gama\|^s_{0,\infty} \leq k.
\end{equation}

Once the deepest representation has been estimated, we proceed by obtaining the remaining ones by simply applying their definition, thus assuring that $\hat{\x} = \D^{(i)}\hat{\gama}_i \in \M_\lambda$. While this might seem like a dull strategy, we will see in the next section that, if the measurements $\y$ are close enough to a signal in the model, Algorithm \ref{Alg:MulilayerPursuit} indeed provides stable estimates $\hat{\gama}_i$. In fact, the resulting stability bounds will be shown to be generally tighter than those existing for the layer-wise pursuit alternative. Moreover, as we will later see in the Results section, this approach can effectively be harnessed in practice in a real-data scenario.

\subsection{Stability Guarantees for Pursuit Algorithms}
\label{sec:StabilityPursuits}
Given a signal $\y = \x(\gama_i) + \v$, and the respective solution of the ML-CSC Pursuit in Algorithm \ref{Alg:MulilayerPursuit}, how close will the estimated $\hat{\gama}_i$ be to the original representations $\gama_i$? These bounds will clearly depend on the specific Pursuit algorithm employed to obtain $\hat{\gama}_L$. In what follows, we will present two stability guarantees that arise from solving this sparse coding problem under two different strategies: a greedy and a convex relaxation approach. Before diving in, however, we present two elements that will become necessary for our derivations. 

The first one is a property that relates to the propagation of the support, or non-zeros, across the layers. Given the support of a sparse vector $\mathcal{T} = Supp(\gama)$, consider dictionary $\D_\mathcal{T}$ as the matrix containing only the columns indicated by $\mathcal{T}$. Define $\|\D_\mathcal{T}\|^0_{\infty} = \sum_{i=1}^{n} \|\mathcal{R}_i \D_\mathcal{T}\|^0_{\infty}$, where $\mathcal{R}_i$ extracts the $i^{th}$ \emph{row} of the matrix on its right-hand side. In words, $\|\D_\mathcal{T}\|^0_{\infty}$ simply counts the number of non-zero rows of $\D_\mathcal{T}$. With it, we now define the following property:

\begin{defn}{Non Vanishing Support (N.V.S.):}\\
	A sparse vector $\gama$ with support $\mathcal{T}$ satisfies the N.V.S property for a given dictionary $\D$ if 
	\begin{equation}
		\| \D \gama \|_0 = \|\D_\mathcal{T}\|^0_{\infty}.
	\end{equation}
	\label{def:N.V.S.Property}
\end{defn}
Intuitively, the above property implies that the entries in $\gama$ will not cause two or more atoms to be combined in such a way that (any entry of) their supports cancel each other. Notice that this is a very natural assumption to make. 
Alternatively, one could assume the non-zero entries from $\gama$ to be Gaussian distributed, and in this case the N.V.S. property holds \emph{a.s.}

A direct consequence of the above property is that of maximal cardinality of representations. If $\gama$ satisfies the N.V.S property for a dictionary $\D$, and $\bar{\gama}$ is another sparse vector with equal support (i.e., $Supp(\gama) = Supp(\bar{\gama})$), then necessarily $Supp(\D\bar{\gama}) \subseteq Supp(\D\gama)$, and thus $\|\D\gama\|_0 \geq \|\D\bar{\gama}\|_0$. This follows from the fact that the number of non-zeros in $\D\bar{\gama}$ cannot be greater than the sum of non-zero rows from the set of atoms, $\D_\mathcal{T}$. 

The second element concerns the local stability of the Stripe-RIP, the convolutional version of the Restricted Isometric Property \cite{Candes2005}. As defined in \cite{WorkingLocallyThinkingGlobally}, a convolutional dictionary $\D$ satisfies the Stripe-RIP condition with constant $\delta_k$ if, for every $\gama$ such that $\|\gama\|^s_{0,\infty}=k$,
\begin{equation} \label{Eq:SRIP}
	(1-\delta_k)\|\gama\|^2_2 \leq \|\D\gama\|^2_2 \leq (1+\delta_k)\|\gama\|^2_2.
\end{equation}
The S-RIP bounds the maximal change in (global) energy of a $\Loi$-sparse vector when multiplied by a convolutional dictionary. We would like to establish an equivalent property but in a local sense. Recall the $\|\x\|^p_{2,\infty}$ norm, given by the maximal norm of a \emph{patch} from $\x$, i.e. $\|\x\|^p_{2,\infty} = \underset{i}{\max} \|\mathbf{P}_i\x\|_2$. Analogously, one can consider $\|\gama\|^s_{2,\infty} = \underset{i}{\max}\ \|\mathbf{S}_i\gama\|_2$ to be the maximal norm of a \emph{stripe} from $\gama$.

Now, is $\|\D\gama\|^{p}_{2,\infty}$ nearly isometric? The (partially affirmative) answer is given in the form of the following Lemma, which we prove in the Supplementary Material \ref{app:LocalStabilitySRIP}.

\begin{lemma}{Local one-sided near isometry property:} \label{lemma:LocalSRIP} \\
If $\D$ is a convolutional dictionary satisfying the Stripe-RIP condition in \eqref{Eq:SRIP} with constant $\delta_k$, then
\begin{equation}
	\|\D\gama\|^{2,p}_{2,\infty} \leq (1+\delta_k)\ \|\gama\|^{2,s}_{2,\infty}.
\end{equation}
\end{lemma}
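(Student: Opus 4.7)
The plan is to reduce the local, patch-wise inequality to a single invocation of the global Stripe-RIP by exploiting the fact that a convolutional $\D$ couples each patch of its output only to a bounded stripe of its input. Let $i^{*}$ be the index that attains the maximum defining $\|\D\gama\|^{p}_{2,\infty}$, and define $\tilde{\gama}$ to be the vector that agrees with $\gama$ on the coordinates lying in the $i^{*}$-th stripe window and vanishes elsewhere. All of the work is then to show that $\tilde{\gama}$ is an object that (i) preserves the patch of interest and (ii) is still controlled by the Stripe-RIP.

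First, the convolutional structure gives the localization identity $\mathbf{P}_{i^{*}} \D \gama = \mathbf{P}_{i^{*}} \D \tilde{\gama}$, since coordinates of $\gama$ outside the $i^{*}$-th stripe window cannot reach this patch. Second, a patch of a vector has smaller Euclidean norm than the vector itself, so $\|\mathbf{P}_{i^{*}} \D \tilde{\gama}\|_{2}^{2} \leq \|\D \tilde{\gama}\|_{2}^{2}$. Third, $\tilde{\gama}$ inherits the stripe-$\ell_{0}$ sparsity of $\gama$: any stripe of $\tilde{\gama}$ has its non-zeros confined to the intersection with the $i^{*}$-th stripe window, so $\|\mathbf{S}_{j}\tilde{\gama}\|_{0} \leq \|\mathbf{S}_{j}\gama\|_{0}$ for every $j$, whence $\|\tilde{\gama}\|^{s}_{0,\infty} \leq \|\gama\|^{s}_{0,\infty} \leq k$, and the upper half of the Stripe-RIP in \eqref{Eq:SRIP} yields $\|\D \tilde{\gama}\|_{2}^{2} \leq (1+\delta_{k}) \|\tilde{\gama}\|_{2}^{2}$. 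Finally, zero-padding preserves the Euclidean norm, so $\|\tilde{\gama}\|_{2}^{2} = \|\mathbf{S}_{i^{*}} \gama\|_{2}^{2} \leq (\|\gama\|^{s}_{2,\infty})^{2}$. Chaining these four inequalities produces the claimed bound.

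The only real subtlety is the bookkeeping needed to check that the localized vector $\tilde{\gama}$ simultaneously reproduces the chosen patch of $\D\gama$ and respects the stripe-$\ell_{0}$ budget $k$, so that the global Stripe-RIP is legitimately applicable; both of these properties come down to a careful accounting of which coordinates of $\gama$ the $i^{*}$-th convolutional patch actually sees. The reason the lemma is advertised as \emph{one-sided} is that the step $\|\mathbf{P}_{i^{*}} \D \tilde{\gama}\|_{2}^{2} \leq \|\D \tilde{\gama}\|_{2}^{2}$, together with the replacement of $\gama$ by $\tilde{\gama}$, can both strictly discard energy; this is harmless for the upper bound but destroys any matching lower near-isometry, so no analogous lower estimate can be obtained by this route.
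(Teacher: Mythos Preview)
Your proof is correct and shares the paper's core localization idea---that the $i^{*}$-th patch of $\D\gama$ depends only on the restriction of $\gama$ to the $i^{*}$-th stripe window---but your execution is more elementary. The paper introduces an operator $\bar{\mathbf{S}}_{i}$ extracting the entries of $\gama$ lying in the $i$-th stripe (so your $\tilde{\gama}$ is exactly their $\bar{\mathbf{S}}_{i^{*}}^{T}\bar{\mathbf{S}}_{i^{*}}\gama$), and then bounds the \emph{operator norm} $\|\D\bar{\mathbf{S}}_{i}^{T}\|_{2}$: it invokes the Poincar\'e Separation Theorem to compare $\lambda_{\max}\bigl(\bar{\mathbf{S}}_{i}\D^{T}\D\bar{\mathbf{S}}_{i}^{T}\bigr)$ with $\lambda_{\max}\bigl(\D_{\mathcal{T}}^{T}\D_{\mathcal{T}}\bigr)$, where $\mathcal{T}=Supp(\gama)$, and only then applies the Stripe-RIP to control the latter. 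You sidestep this eigenvalue-interlacing detour entirely by applying the Stripe-RIP directly to the single vector $\tilde{\gama}$, which is legitimate once you have verified $\|\tilde{\gama}\|^{s}_{0,\infty}\leq k$. This buys a shorter and more transparent argument; the paper's route, on the other hand, yields as a by-product the slightly stronger intermediate fact that the operator norm of any sub-dictionary indexed by a single stripe's support is at most $\sqrt{1+\delta_{k}}$, which could be of independent use.
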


This result is worthy in its own right, as it shows for the first time that not only the CSC model is globally stable for $\Loi$-sparse signals, but that one can also bound the change in energy in a local sense (in terms of the $\ell_{2,\infty}$ norm). 
While the above Lemma only refers to the upper bound of $\|\D\gama\|^{2,p}_{2,\infty}$, we conjecture that an analogous lower bound can be shown to hold as well. 

With these elements, we can now move to the stability of the solutions provided by Algorithm \ref{Alg:MulilayerPursuit}:

\begin{thm}{Stable recovery of the Multi-Layer Pursuit Algorithm in the convex relaxation case:} \label{Thm:StabilityPursuitLasso} \\
	Suppose a signal $\x(\gama_i) \in \M_\lamda$ is contaminated with locally-bounded noise $\v$, resulting in $\y = \x + \v$, $\|\v\|^p_{2,\infty} \leq \epsilon_0$. Assume that all representations $\gama_i$ satisfy the N.V.S. property for the respective dictionaries $\D_i$, and that $\|\gama_i\|^s_{0,\infty} = \lambda_i < \frac{1}{2}\left(1+\frac{1}{\mu(\D_i)}\right)$, for $1\leq i \leq L$ and $\|\gama_L\|^s_{0,\infty} = \lambda_L \leq \frac{1}{3}\left(1+\frac{1}{\mu(\D^{(L)})}\right)$. Let
	\begin{equation}
		\hat{\gama}_L = \underset{\gama}{\arg\min} \| \y - \D^{(L)} \gama \||^2_2 + \zeta_L \|\gama\|_1,
	\end{equation}
	for $\zeta_L = 4\epsilon_0$, and set $\hat{\gama}_{i-1} = \D_i\hat{\gama}_i$, $i=L,\dots,1$.  Then,
	\begin{enumerate}
		\item $Supp(\hat{\gama}_i) \subseteq Supp(\gama_i)$,
		\item $\|\hat{\gama}_i - \gama_i\|^p_{2,\infty} \leq \epsilon_L  \displaystyle\prod\limits_{j=i+1}^{L} \sqrt{\frac{3 c_j}{2}}$,
	\end{enumerate}
	hold for every layer $1\leq i\leq L$, where $\epsilon_L = \frac{15}{2}\ \epsilon_0 \sqrt{\|\gama_L\|^p_{0,\infty}}$ is the error at the last layer, and  $c_j$ depends on the ratio between the local dimensions of the layers, $c_j = \Bigl\lceil \frac{2n_{j-1}-1}{n_j} \Bigr\rceil$.
\end{thm}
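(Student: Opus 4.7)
My plan is to split the proof into a base case at the deepest layer $i=L$, where the Lasso actually runs, and an inductive downward propagation for $i=L-1,\ldots,1$, where $\hat\gama_{i-1}=\D_i\hat\gama_i$ by construction. This mirrors the structure of Algorithm~\ref{Alg:MulilayerPursuit}: only the deepest representation is obtained by optimization, while the remaining ones are synthesized. Accordingly, the two items in the theorem -- support inclusion and the $\ell_{2,\infty}$ error bound -- both admit natural base-plus-induction treatments.

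For the base case, I would appeal to the stability theorem for the Lasso in the CSC setting, as developed in \cite{WorkingLocallyThinkingGlobally} and used in \cite{Papyan2016convolutional}. This is legitimate because Lemma~\ref{lemma:MLCSCisCSC} certifies that $\D^{(L)}$ is itself convolutional, so the entire CSC analytic machinery applies to the problem $\min_{\gama}\|\y-\D^{(L)}\gama\|_2^2+\zeta_L\|\gama\|_1$. The assumption $\lambda_L\le\tfrac{1}{3}\bigl(1+1/\mu(\D^{(L)})\bigr)$ matches the sparsity hypothesis of that result, and the choice $\zeta_L=4\epsilon_0$ matches its prescribed regularization. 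The theorem then delivers both $\text{Supp}(\hat\gama_L)\subseteq\text{Supp}(\gama_L)$ and the $\ell_{2,\infty}$ error bound $\|\hat\gama_L-\gama_L\|^p_{2,\infty}\le \tfrac{15}{2}\epsilon_0\sqrt{\|\gama_L\|^p_{0,\infty}}=\epsilon_L$. At $i=L$ the product in the theorem is empty, so this is exactly the claim.

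For the inductive step, I would assume both conclusions hold at layer $i$ and prove them at layer $i-1$. Support inclusion is the easier piece: since $\text{Supp}(\hat\gama_i)\subseteq\text{Supp}(\gama_i)$, the N.V.S.\ assumption on $\gama_i$ together with the maximal-cardinality consequence noted after Definition~\ref{def:N.V.S.Property} forces $\text{Supp}(\D_i\hat\gama_i)\subseteq\text{Supp}(\D_i\gama_i)$, i.e.\ $\text{Supp}(\hat\gama_{i-1})\subseteq\text{Supp}(\gama_{i-1})$. For the error, set $\delta_j=\hat\gama_j-\gama_j$, so $\delta_{i-1}=\D_i\delta_i$. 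I would apply Lemma~\ref{lemma:LocalSRIP} to $\D_i$, then bound the Stripe-RIP constant via the standard Gershgorin estimate $\delta_k\le(k-1)\mu(\D_i)$; the hypothesis $\lambda_i<\tfrac{1}{2}\bigl(1+1/\mu(\D_i)\bigr)$ immediately yields $1+\delta_k\le 3/2$, which is exactly what produces the constant $3/2$ in the statement. The factor $c_i=\lceil(2n_{i-1}-1)/n_i\rceil$ then arises from converting between the patch/stripe scales of the two adjacent layers -- concretely, from covering a length-$(2n_{i-1}-1)$ segment at layer $i-1$ by non-overlapping length-$n_i$ windows, which is the natural patch size produced by Lemma~\ref{lemma:LocalSRIP} applied to $\D_i$. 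Assembling these ingredients yields the per-step contraction $\|\delta_{i-1}\|^p_{2,\infty}\le\sqrt{3c_i/2}\,\|\delta_i\|^p_{2,\infty}$, and the induction gives the desired product.

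The main obstacle I expect is the careful bookkeeping of patch versus stripe sizes across layers, since the filter dimensions $n_i$ differ per layer and Lemma~\ref{lemma:LocalSRIP} naturally produces bounds whose destination patch size is $n_i$ when applied to $\D_i$. Ensuring that this is reconciled with the patch norm used at layer $i-1$ in the theorem -- and that a single tight combinatorial conversion captures exactly $c_i$ rather than accumulating additional constants -- is the delicate part. A secondary, easier check is that the per-layer hypothesis involving $\mu(\D_i)$ (not $\mu(\D^{(i)})$) is the correct one for controlling $\delta_k$ at that layer; this is what keeps the Stripe-RIP analysis local and prevents the bound from degrading as one descends through the network, in contrast to the layered $\DCPE$ pursuit.
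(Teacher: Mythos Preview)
Your proposal is correct and follows essentially the same route as the paper: a Lasso stability result for the CSC model at the deepest layer (support inclusion plus an $\ell_\infty$ bound, then converted to $\ell_{2,\infty}^p$ via the patch sparsity), followed by downward induction using the N.V.S.\ property for supports and Lemma~\ref{lemma:LocalSRIP} plus the stripe-to-patch conversion $\|\cdot\|^{2,s}_{2,\infty}\le c_j\|\cdot\|^{2,p}_{2,\infty}$ for the error, with $1+\delta_k\le 3/2$ coming from $\delta_k\le(k-1)\mu(\D_j)$ and the per-layer coherence hypothesis. The only cosmetic difference is that the paper explicitly passes through the $\ell_\infty$ bound $\|\Delt_L\|_\infty\le\tfrac{15}{2}\epsilon_0$ before reaching $\epsilon_L$, whereas you quote the $\ell_{2,\infty}^p$ bound directly.
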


\begin{thm}{Stable recovery of the Multi-Layer Pursuit Algorithm in the greedy case:} \label{Thm:StabilityPursuitOMP} \\
	Suppose a signal $\x(\gama_i) \in \M_\lamda$ is contaminated with energy-bounded noise $\v$, such that $\y = \x + \v$, $\|\y-\x\|_2 \leq \mathcal{E}_0$, and $\epsilon_0 = \|\v\|^\pp_{2,\infty}$. Assume that all representations $\gama_i$ satisfy the N.V.S. property for the respective dictionaries $\D_i$, with $\|\gama_i\|^s_{0,\infty} = \lambda_i < \frac{1}{2}\left(1+\frac{1}{\mu(\D_i)}\right)$, for $1\leq i \leq L$, and
	\begin{equation} \label{omp_hypothesis}
	\|\gama_L\|^s_{0,\infty} < \frac{1}{2}\left( 1+\frac{1}{\mu(\D^{(L)})} \right)-\frac{1}{\mu(\D^{(L)})}\cdot\frac{\epsilon_0}{|\gamma_{L}^{min}|},
	\end{equation}
	where $\gamma_{L}^{min}$ is the minimal entry in the support of $\gama_{L}$.
	Consider approximating the solution to the Pursuit step in Algorithm \ref{Alg:MulilayerPursuit} by running Orthogonal Matching Pursuit for $\|\gama_L\|_0$ iterations. Then, for every $i^{th}$ layer,
	\begin{enumerate}
		\item $Supp(\hat{\gama}_i) \subseteq Supp(\gama_i)$,
		\item $\|\hat{\gama}_i - \gama_i\|^2_2 \leq \frac{\mathcal{E}_0^2}{1-\mu(\D^{(L)})(\|\gama_L\|^s_{0,\infty}-1)} \left(\frac{3}{2}\right)^{L-i}$.
	\end{enumerate}
\end{thm}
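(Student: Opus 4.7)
My plan is to decompose the proof into a base case at the deepest layer $L$ and a backward induction from layer $L$ down to layer $1$. Algorithm \ref{Alg:MulilayerPursuit} first estimates $\hat{\gama}_L$ by OMP applied to the effective dictionary $\D^{(L)}$ and then \emph{defines} each remaining estimate by $\hat{\gama}_{i-1} = \D_i \hat{\gama}_i$. Therefore all of the recovery error is concentrated at the deepest layer, and the shallower layers only require controlling how that error propagates through the chain of convolutional multiplications.

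For the base case ($i = L$), I would invoke a convolutional OMP stability guarantee, analogous to the classical Donoho-Elad-Temlyakov result but phrased through the $\ell_{0,\infty}^s$ sparsity and the mutual coherence $\mu(\D^{(L)})$. Hypothesis \eqref{omp_hypothesis} is precisely the local convolutional version of the OMP success condition, with the correction term $\tfrac{1}{\mu(\D^{(L)})}\cdot\epsilon_0/|\gamma_L^{\min}|$ ensuring that the largest-correlation rule at each iteration still picks an atom inside $\mathrm{Supp}(\gama_L)$ despite the noise. Under this condition, running OMP for $\|\gama_L\|_0$ iterations yields $\mathrm{Supp}(\hat{\gama}_L) \subseteq \mathrm{Supp}(\gama_L)$. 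The accompanying error bound then follows from a standard least-squares argument: the Gram matrix of any $\|\gama_L\|^s_{0,\infty}$-stripe-sparse sub-dictionary of $\D^{(L)}$ has smallest eigenvalue at least $1 - \mu(\D^{(L)})(\|\gama_L\|^s_{0,\infty}-1)$, which, together with the global noise bound $\|\y - \x\|_2 \leq \mathcal{E}_0$, gives exactly the claimed bound at $i=L$ (i.e., with the factor $(3/2)^{L-L}=1$).

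For the inductive step, suppose the claim holds at layer $i$. Support containment at layer $i-1$ is immediate from the N.V.S. property: since $\mathrm{Supp}(\hat{\gama}_i) \subseteq \mathrm{Supp}(\gama_i) =: \mathcal{T}_i$, the non-zero rows of $\D_i \hat{\gama}_i$ are contained in the non-zero rows of $\D_{i,\mathcal{T}_i}$, which by Definition \ref{def:N.V.S.Property} coincide with $\mathrm{Supp}(\D_i \gama_i) = \mathrm{Supp}(\gama_{i-1})$. For the error bound, note that $\hat{\gama}_i - \gama_i$ is supported in $\mathcal{T}_i$, hence $\|\hat{\gama}_i - \gama_i\|^s_{0,\infty} \leq \lambda_i$. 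Because $\lambda_i < \tfrac{1}{2}\!\left(1 + 1/\mu(\D_i)\right)$, the standard estimate $\delta_{\lambda_i} \leq (\lambda_i - 1)\mu(\D_i) < \tfrac{1}{2}$ on the Stripe-RIP constant of $\D_i$ holds, and the upper bound in \eqref{Eq:SRIP} yields
\begin{equation*}
\|\hat{\gama}_{i-1} - \gama_{i-1}\|_2^2 = \|\D_i(\hat{\gama}_i - \gama_i)\|_2^2 \leq (1 + \delta_{\lambda_i})\|\hat{\gama}_i - \gama_i\|_2^2 \leq \tfrac{3}{2}\|\hat{\gama}_i - \gama_i\|_2^2.
\end{equation*}
Cascading this inequality from $L$ down to $i$ multiplies the base bound by $(3/2)^{L-i}$, matching the stated rate.

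The main obstacle is the base case: rigorously transferring the scalar Donoho-Elad-Temlyakov OMP analysis to the convolutional setting so that (i) the success condition is expressed through the stripe sparsity $\|\gama_L\|^s_{0,\infty}$ and the local noise norm $\epsilon_0$, while (ii) the resulting $\ell_2$ error is measured against the global noise budget $\mathcal{E}_0$. Once the base case is established, the propagation step is essentially bookkeeping: N.V.S. hands us support containment, and the coherence assumption on each $\D_i$ is just strong enough to keep $1 + \delta_{\lambda_i}$ below $3/2$ at every layer.
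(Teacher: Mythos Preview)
Your proposal is correct and follows essentially the same route as the paper: invoke the convolutional OMP guarantee (the paper cites Theorem~17 of \cite{WorkingLocallyThinkingGlobally}) for the base case at layer $L$, then propagate support containment via the N.V.S.\ property and the $\ell_2$ error via the Stripe-RIP bound $1+\delta_{\lambda_i}\le 1+(\lambda_i-1)\mu(\D_i)<3/2$. The only cosmetic difference is that the paper records exact support recovery $\mathrm{Supp}(\hat{\gama}_L)=\mathrm{Supp}(\gama_L)$ at the deepest layer (since OMP runs for $\|\gama_L\|_0$ iterations), whereas you state the weaker inclusion, but this does not affect the argument.
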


The proofs of both Theorems \ref{Thm:StabilityPursuitLasso} and \ref{Thm:StabilityPursuitOMP} are included in the Supplementary Material \ref{app:GuaranteesConvexRelaxation} and \ref{app:StableGuaranteesGreedy}, respectively. The coefficient $c_j$ refers to the ratio between the filter dimensions at consecutive layers, and assuming $n_i \approx n_{i+1}$ (which indeed happens in practice), this coefficient is roughly 2. 
Importantly, and unlike the bounds provided for the layer-wise pursuit algorithm, the recovery guarantees are the tightest for the inner-most layer, and the bound increases slightly towards shallower representations. The relaxation to the $\ell_1$ norm, in the case of the BP formulation, provides local error bounds, while the guarantees for the greedy version, in its OMP implementation, yield a global alternative.

Before proceeding, one might wonder if the above conditions imposed on the representations and dictionaries are too severe and whether the set of signals satisfying these is empty. This is, in fact, not the case. As shown in \cite{Papyan2016convolutional}, multi-layer convolutional dictionaries can be constructed by means of certain wavelet functions, effectively achieving mutual coherence values in the order of $10^{-3}$, leaving ample room for sampling sparse representations satisfying the theorems' assumptions. On the other hand, imposing a constraint on the number of non-zeros in a representation $\gama_{i-1} = \D_i \gama_i$ implies that part of the support of the atoms in $\D_i$ will be required to overlap. The N.V.S. property simply guarantees that whenever these overlaps occur, they will not cancel each other. Indeed, this happens with probability 1 if the non-zero coefficients are drawn from a Normal distribution. We further comment and exemplify this in the Supplementary Material \ref{app:DiscussionNVS}.


\subsection{Projecting General Signals}

In the most general case, i.e. removing the assumption that $\y$ is close enough to a signal in the model, Algorithm \ref{Alg:MulilayerPursuit} by itself might not solve $\PM$. Consider we are given a general signal $\y$ and a model $\M_\lamda$, and we run the ML-CSC Pursuit with $k = \lambda_L$ obtaining a set of representations $\{\hat{\gama}_{j}\}$. Clearly $\|\hat{\gama}_L\|^s_{0,\infty} \leq \lambda_L$. Yet, nothing guarantees that $\| \hat{\gama}_{i}\|^s_{0,\infty} \leq \lambda_i$ for $i<L$. In other words, in order to solve $\PM$ one must guarantee that all sparsity constraints are satisfied. 

Algorithm \ref{Alg:ProjectionAlgorithm} progressively recovers sparse representations to provide a projection for any general signal $\y$. The solution is initialized with the zero vector, and then the OMP algorithm is applied with a progressively larger $\ell_{0,\infty}$ constraint on the deepest representation\footnote{Instead of repeating the pursuit from scratch at every iteration, one might-warm start the OMP algorithm by employing current estimate, $\hat{\gama}_L$, as initial condition so that only new non-zeros are added.}, from 1 to $\lambda_L$. The only modification required to run the OMP in this setting is to check at every iteration the value of $\|\hat{\gama}_L\|^s_{0,\infty}$, and to stop accordingly. At each step, given the estimated $\hat{\gama}_L$, the intermediate features and their $\ell_{0,\infty}$ norms, are computed. If all sparsity constraints are satisfied, then the algorithm proceeds. If, on the other hand, any of the constraints is violated, the previously computed $\x^\ast$ is reported as the solution. Note that this algorithm can be improved: if a constraint is violated, one might consider back-tracking the obtained deepest estimate and replacing the last obtained non-zero by an alternative solution, which might allow for the intermediate constraints to be satisfied. For simplicity, we present the completely greedy approach as in Algorithm \ref{Alg:ProjectionAlgorithm}.


\begin{algorithm}[t] 
	Init: $\x^\ast = \mathbf{0}$ \; 
	\For{$k = 1 : \lambda_L $}{ 
	$ \hat{\gama}_L \leftarrow \text{OMP}(\y,\D^{(L)},k)$ \;
	\For{$j = L : -1 : 1$}{
		$\hat{\gama}_{j-1} \leftarrow \D_j\hat{\gama}_j $\;
		}
		\If{$ \|\hat{\gama}_i\|^s_{0,\infty} > \lambda_i$ for any $1\leq i < L$ }	
		{ break\;}
		\Else{$\x^\ast \leftarrow \D^{(i)}\hat{\gama}_i$\;}

	}
	
	\Return{$\x^\ast$}
	\caption{ML-CSC Projection Algorithm}
	\label{Alg:ProjectionAlgorithm}
\end{algorithm}
This algorithm can be shown to be a greedy approximation to an optimal algorithm, under certain assumptions, and we provide a sketch of the proof of this claim in the Supplementary Material \ref{app:SketchProofProjection}.
Clearly, while Algorithms \ref{Alg:MulilayerPursuit} and \ref{Alg:ProjectionAlgorithm} were presented separately, they are indeed related and one can certainly combine them into a single method. The distinction between them was motivated by making the derivations of our theoretical analysis and guarantees easier to grasp.
Nevertheless, stating further theoretical claims without the assumption of the signal $\y$ being close to an underlying $\x(\gama_i) \in \M_\lambda$ is non-trivial, and we defer a further analysis of this case for future work.

\subsection{Summary - Pursuit for the ML-CSC}

Let us briefly summarize what we have introduced so far. We have defined a projection problem, $\PM$, seeking for the closest signal in the model $\M_\lamda$ to the measurements $\y$. We have shown that if the measurements $\y$ are close enough to a signal in the model, i.e. $\y = \x(\gama_i) + \v$, with bounded noise $\v$, then the ML-CSC Pursuit in Algorithm \ref{Alg:MulilayerPursuit} manages to obtain approximate solutions that are not far from these representations, by deploying either the OMP or the BP algorithms. In particular, the support of the estimated sparse vectors is guaranteed to be a subset of the correct support, and so all $\hat{\gama}_i$ satisfy the model constraints. In doing so we have introduced the N.V.S. property, and we have proven that the CSC and ML-CSC models are locally stable. Lastly, if no prior information is known about the signal $\y$, we have proposed an OMP-inspired algorithm that finds the closest signal $\x(\gama_i)$ to any measurements $\y$ by gradually increasing the support of all representations $\hat{\gama}_i$ while guaranteeing that the model constraints are satisfied. 






\section{Learning the model}
\label{sec:Learning}
\label{sec:empty_models?}
The entire analysis presented so far relies on the assumption of the existence of proper dictionaries $\D_i$ allowing for corresponding \emph{nested sparse features} $\gama_i$. Clearly, the ability to obtain such representations greatly depends on the design and properties of these dictionaries. 

While in the traditional sparse modeling scenario certain analytically-defined dictionaries (such as the Discrete Cosine Transform) often perform well in practice, in the ML-CSC case it is hard to propose an off-the-shelf construction which would allow for any meaningful decompositions. To see this more clearly, consider obtaining $\hat{\gama}_L$ with Algorithm \ref{Alg:MulilayerPursuit} removing all other assumptions on the dictionaries $\D_i$. In this case, nothing will prevent $\hat{\gama}_{L-1} = \D_L\hat{\gama}_L$ from being dense. More generally, we have no guarantees that \emph{any} collection of dictionaries would allow for any signal with nested sparse components $\gama_i$. In other words, how do we know if the model represented by $\{\D_i\}_{i=1}^L$ is not empty?

To illustrate this important point, consider the case where $\D_i$ are random -- a popular construction in other sparsity-related applications. In this case, every atom from the dictionary $\D_L$ will be a random variable $\d_L^j \sim \mathcal{N}(\mathbf{0},\sigma^2_L\mathbf{I})$. In this case, one can indeed construct $\gama_L$, with $\|\gama_{L}\|^s_{0,\infty} \leq 2$, such that \emph{every entry} from $\gama_{L-1} = \D_L\gama_L$ will be a random variable $\gamma^j_{L-1}\sim \mathcal{N}(0,\sigma^2_L)$, $\forall\ j$. Thus, $\Pr\left(\gamma^j_{L-1}=0\right)=0$. As we see, there will not exist any sparse (or dense, for that matter) $\gama_L$ which will create a sparse $\gama_{L-1}$. In other words, for this choice of dictionaries, the ML-CSC model is empty.

\subsection{Sparse Dictionaries}

From the discussion above one can conclude that one of the key components of the ML-CSC model is sparse dictionaries: if both $\gama_L$ and $\gama_{L-1} = \D_L\gama_L$ are sparse, then atoms in $\D$ must indeed contain only a few non-zeros. We make this observation concrete in the following lemma.

\begin{lemma}{Dictionary Sparsity Condition}\label{lemma:SparseDictionaries} \\
	Consider the ML-CSC model $\M_\lamda$ described by the dictionaries $\{\D_i\}_{i=1}^L$ and the layer-wise $\ell_{0,\infty}$-sparsity levels $\lambda_1,\lambda_2,\dots,\lambda_L$. Given $\gama_L : \|\gama_L\|^s_{0,\infty} \leq \lambda_L$ and constants $c_i = \Bigl\lceil \frac{2n_{i-1}-1}{n_i} \Bigr\rceil$, the signal $\x = \D^{(L)} \gama_L \in \M_\lamda$ if
	\begin{equation}
		\|\D_i\|_0 \leq \frac{\lambda_{i-1}}{\lambda_i c_i}, \quad \forall\ 1<i\leq L.
	\end{equation}
\end{lemma}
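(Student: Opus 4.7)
The plan is to prove the claim by backward induction on the layers. The base case is the hypothesis $\|\gama_L\|^s_{0,\infty} \leq \lambda_L$. For the inductive step, assume $\|\gama_i\|^s_{0,\infty} \leq \lambda_i$; it suffices to show that every stripe of $\gama_{i-1} = \D_i \gama_i$ contains at most $\lambda_{i-1}$ non-zeros, which together with the definition of $\M_\lamda$ yields $\x = \D^{(L)}\gama_L \in \M_\lamda$.

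Fix an arbitrary stripe $\mathbf{S}_j \gama_{i-1}$; spatially it spans $2n_{i-1}-1$ consecutive positions across all $m_{i-1}$ channels. The core of the argument is a covering-partition step: divide this spatial window into $c_i = \lceil (2n_{i-1}-1)/n_i \rceil$ consecutive chunks of length at most $n_i$. For a chunk of length $n_i$ at spatial range $[q,q+n_i-1]$, an atom of $\D_i$ (spatial extent $n_i$, located at position $p$, taken across any of the $m_i$ channels) has support intersecting the chunk iff $p \in [q-n_i+1, q+n_i-1]$, a range of exactly $2n_i-1$ positions. This is precisely the spatial extent of a single stripe of $\gama_i$, so the indices of $\gama_i$ whose atoms can contribute to the chunk fit inside one stripe $\mathbf{S}_{q}\gama_i$, and hence at most $\|\gama_i\|^s_{0,\infty} \leq \lambda_i$ of them are active.

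Each active index activates one column of $\D_i$, and the support of $\mathbf{S}_j \gama_{i-1}$ is contained in the union of the row-supports of those active columns — no cancellation is needed, since we only require an upper bound (so the N.V.S. property is not invoked here). Each active column contributes at most $\|\D_i\|_0$ non-zeros to the stripe under consideration. Summing over the $c_i$ chunks gives
\begin{equation*}
\|\mathbf{S}_j \gama_{i-1}\|_0 \;\leq\; c_i \cdot \lambda_i \cdot \|\D_i\|_0 \;\leq\; c_i \cdot \lambda_i \cdot \frac{\lambda_{i-1}}{\lambda_i\, c_i} \;=\; \lambda_{i-1},
\end{equation*}
using the hypothesis $\|\D_i\|_0 \leq \lambda_{i-1}/(\lambda_i c_i)$. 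Taking the maximum over $j$ closes the inductive step.

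The only delicate point — really a bookkeeping matter rather than a conceptual obstacle — is verifying the covering claim: that each of the $c_i$ chunks of length $n_i$ in $\gama_{i-1}$ corresponds to contributing atoms whose positions in $\gama_i$ lie inside a single stripe of spatial extent $2n_i-1$. This is exactly what the computation above establishes, and it matches the definition of $c_i$, so the bound is tight in the worst-case counting sense. Boundary effects (a truncated final chunk, or stripes near the signal boundary) can only reduce the count and do not affect the inequality.
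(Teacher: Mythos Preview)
Your proof is correct and follows essentially the same route as the paper's. The paper argues via the two inequalities $\|\gama_{i-1}\|^p_{0,\infty} \leq \|\D_i\|_0\, \|\gama_i\|^s_{0,\infty}$ and $\|\gama_{i-1}\|^s_{0,\infty} \leq c_i\, \|\gama_{i-1}\|^p_{0,\infty}$ (the first cited from \cite{Papyan2016convolutional}, the second from the stripe/patch count), then iterates; your chunk-covering argument is precisely an explicit unpacking of those two steps combined into a single bound per stripe.
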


The simple proof of this Lemma is included in the Supplementary Material \ref{app:SparseDictioanries}. Notably, while this claim does not tell us if a certain model is empty, it does guarantee that if the dictionaries satisfy a given sparsity constraint, one can simply sample from the model by drawing the inner-most representations such that $\|\gama_L \|^s_{0,\infty}\leq \lambda_L$. One question remains: how do we train such dictionaries from real data?

\subsection{Learning Formulation}
\label{subsec:Learning}

One can understand from the previous discussion that there is no hope in solving the $\PM$ problem for real signals without also addressing the learning of dictionaries $\D_i$ that would allow for the respective representations. To this end, considering the scenario where one is given a collection of $K$ training signals, $\{\y^k\}_{k=1}^K$, we upgrade the $\PM$ problem to a learning setting in the following way:
\begin{equation}
	\min_{ \{\gama^k_i\},\{\D_i\}} \quad \sum_{k=1}^K \| \y^k - \x^k(\gama^k_i,\D_i) \|^2_2 \quad \text{s.t.} \left\{
	\begin{array}{c}
		\x^k \in\mathcal{M}_\lamda, \\
		\|\d^j_i\|_2 = 1, \forall \ i,j.
	\end{array}
\right.		
\label{Eq:FirstLearningProblemm}
\end{equation}
We have included the constraint of every dictionary atom to have a unit norm to prevent arbitrarily small coefficients in the representations $\gama_i^k$. This formulation, while complete, is difficult to address directly: The constraints on the representations $\gama_i$ are coupled, just as in the pursuit problem discussed in the previous section. In addition, the sparse representations now also depend on the variables $\D_i$. In what follows, we provide a relaxation of this cost function that will result in a simple learning algorithm.

The problem above can also be understood from the perspective of minimizing the number of non-zeros in the representations at every layer, subject to an error threshold -- a typical reformulation of sparse coding problems. Our main observation arises from the fact that, since $\gama_{L-1}$ is function of both $\D_L$ and $\gama_L$, one can upper-bound the number of non-zeros in $\gama_{L-1}$ by that of $\gama_{L}$. More precisely, 
\begin{equation}
\|\gama_{L-1}\|^s_{0,\infty} \leq c_L \|\D_L\|_0\|\gama_{L}\|^s_{0,\infty},	
\end{equation}
where $c_L$ is a constant\footnote{From \cite{Papyan2016convolutional}, we have that $\|\gama_{L-1}\|^p_{0,\infty} \leq \|\D_L\|_0\|\gama_{L}\|^s_{0,\infty}$. From here, and denoting by $c_{L}$ the upper-bound on the number of patches in a stripe from $\gama_{L-1}$ given by $c_L = \Bigl\lceil \frac{2n_{L-1}-1}{n_L} \Bigr\rceil$, we can obtain a bound to $\|\gama_{L-1}\|^s_{0,\infty}$.}.
Therefore, instead of minimizing the number of non-zeros in $\gama_{L-1}$, we can address the minimization of its upper bound by minimizing both $\|\gama_{L}\|^s_{0,\infty}$ and $\|\D_L\|_0$. This argument can be extended to any layer, and we can generally write
 \begin{equation} 
	\|\gama_{i}\|^s_{0,\infty} \ \leq \ c \prod_{j=i+1}^L \|\D_j\|_0  \|\gama_L\|^s_{0,\infty}.
\end{equation}
In this way, minimizing the sparsity of any $i^{th}$ representation can be done implicitly by minimizing the sparsity of the last layer \emph{and} the number of non-zeros in the dictionaries from layer $(i+1)$ to $L$. Put differently, the sparsity of the intermediate convolutional dictionaries serve as proxies for the sparsity of the respective representation vectors. Following this observation, we now recast the problem in Equation \eqref{Eq:FirstLearningProblemm} into the following Multi-Layer Convolutional Dictionary Learning Problem: 
\begin{multline}
	\min_{\{\gama^k_L\},\{\D_i\}} \sum_{k=1}^K \| \y^k - \D_1\D_2\dots\D_L \gama^k_L \|^2_2 + \sum_{i=2}^L \zeta_i \|\D_i\|_0 \quad \\ \text{s.t.} \left\{
	\begin{array}{c}
		\|\gama^k_L\|^s_{0,\infty} \leq \lambda_L, \\
		\|\d^j_i\|_2 = 1, \forall \ i,j.
	\end{array}
\right.		
\label{Eq:SecondLearningProblemm}
\end{multline}
Under this formulation, this problem seeks for sparse representations $\gama^k_L$ for each example $\y^k$, while forcing the intermediate convolutional dictionaries (from layer 2 to $L$) to be sparse. The reconstructed signal, $\x = \D_1\gama_1$, is not expected to be sparse, and so there is no reason to enforce this property on $\D_1$. Note that there is now only one sparse coding process involved -- that of $\gama^k_L$ -- while the intermediate representations are never computed explicitly. Recalling the theoretical results from the previous section, this is in fact convenient as one only has to estimate the representation for which the recovery bound is the tightest.

Following the theoretical guarantees presented in Section \ref{sec:Projection}, one can alternatively replace the $\Loi$ constraint on the deepest representation by a convex $\ell_1$ alternative. The resulting formulation resembles the lasso formulation of the $\PM$ problem, for which we have presented theoretical guarantees in Theorem \ref{Thm:StabilityPursuitLasso}. In addition, we replace the constraint on the $\ell_2$ of the dictionary atoms by an appropriate penalty term, recasting the above problem into a simpler (unconstrained) form: 
\begin{multline}
	\min_{\{\gama^k_L\},\{\D_i\}} \sum_{k=1}^K \| \y^k - \D_1\D_2\dots\D_L \gama^k_L \|^2_2 +\\ \iota \sum_{i=1}^L \|\D_i\|^2_F + \sum_{i=2}^L \zeta_i \|\D_i\|_0 + \lambda \|\gama^k_L\|_1,
\label{Eq:ThirdLearningProblemm}
\end{multline}
where $\|\cdot\|_F$ denotes the Frobenius norm.
The problem in Equation \eqref{Eq:ThirdLearningProblemm} is highly non-convex, due to the $\ell_0$ terms and the product of the factors. In what follows, we present an online alternating minimization algorithm, based on stochastic gradient descent, which seeks for the deepest representation $\gama_L$ and then progressively updates the layer-wise convolutional dictionaries.

For each incoming sample $\y^k$ (or potentially, a mini-batch), we will first seek for its deepest representation $\gama^k_L$ considering the dictionaries fixed. This is nothing but the $\PM$ problem in \eqref{Eq:PMproblem}, which was analyzed in detail in the previous sections, and its solution will be approximated through iterative shrinkage algorithms. Also, one should keep in mind that while representing each dictionary by $\D_i$ is convenient in terms of notation, these matrices are never computed explicitly -- which would be prohibitive. Instead, these dictionaries (or their transpose) are applied effectively through convolution operators. In turn, this implies that images are not vectorized but processed as 2 dimensional matrices (or 3-dimensional tensors for multi-channel images). In addition, these operators are very efficient due to their high sparsity, and one could in principle benefit from specific libraries to boost performance in this case, such as the one in \cite{liu2015sparse}.

Given the obtained $\gama^k_L$, we then seek to update the respective dictionaries. As it is posed -- with a global $\ell_0$ norm over each dictionary -- this is nothing but a generalized pursuit as well. Therefore, for each dictionary $\D_i$, we minimize the function in Problem \eqref{Eq:ThirdLearningProblemm} by applying $T$ iterations of projected gradient descent. 
This is done by computing the gradient of the $\ell_2$ terms in Problem \eqref{Eq:ThirdLearningProblemm} (call it $f(\D_i)$) with respect to a each dictionary $\D_i$ (i.e., $\nabla f(\D_i)$), making a gradient step and then applying a hard-thresholding operation, $\mathcal{H}_{\zeta_i}(\cdot)$, depending on the parameter $\zeta_i$. This is simply an instance of the Iterative Hard Thresholding algorithm \cite{Blumensath2008}.
In addition, the computation of $\nabla f(\D_i)$ involves only multiplications the convolutional dictionaries for the different layers. The overall algorithm is depicted in Algorithm \ref{Alg:ML-CDL}, and we will expand on further implementation details in the results section.

\begin{algorithm}
\setstretch{1.1}
	\KwData{Training samples $\{\y_k\}_{k=1}^K$, initial convolutional dictionaries $\{\D_i\}_{i=1}^L$}

	\For{$k = 1,\dots,K$}{
		Draw $\y_k$ at random\;
		Sparse Coding: $\gama_L \leftarrow \underset{\gama}{\arg\min}\  \|\y_k - \D^{(L)}\gama\|_2 + \lambda \|\gama\|_{1} $ \;
		Update Dictonaries: \\
		\For{$i = L,\dots,2$}{
			\For{t = 1,\dots,T}{
			$\D_i^{t+1} \leftarrow \mathcal{H}_{\zeta_i}\left[ \D^t_i - \eta  \nabla f(\D^t_i) \right]$ \;
			}
		}
		\For{t = 1,\dots,T}{
			$\D_1^{t+1} \leftarrow \D^t_1 - \eta  \nabla f(\D^t_1)$ \;
			} 
		}
	\caption{Multi-Layer Convolutional Dictionary Learning}
	\label{Alg:ML-CDL}
\end{algorithm}


\begin{figure*}
\begin{center}
		
	\begin{minipage}{.22\textwidth}
		a) \\[.3cm]
		\includegraphics[width = .78\textwidth]{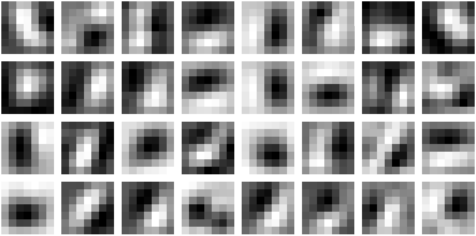}
	\end{minipage} 
		\begin{minipage}{.72\textwidth}
		b) \\[.3cm]
		\includegraphics[width = \textwidth]{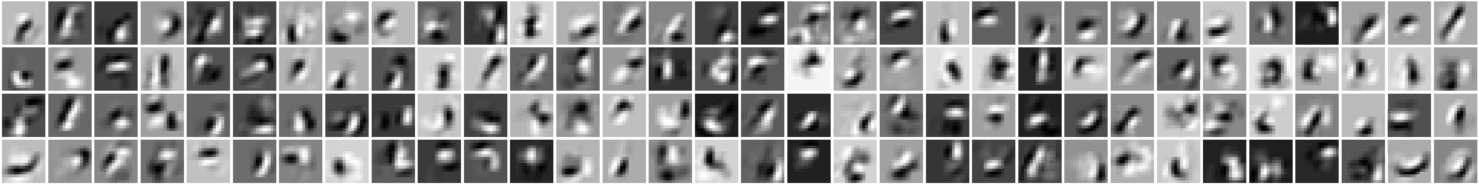}
	\end{minipage}\\[.2cm]

	\begin{minipage}{.95\textwidth}
		c) \\[.3cm]
		\includegraphics[width = \textwidth]{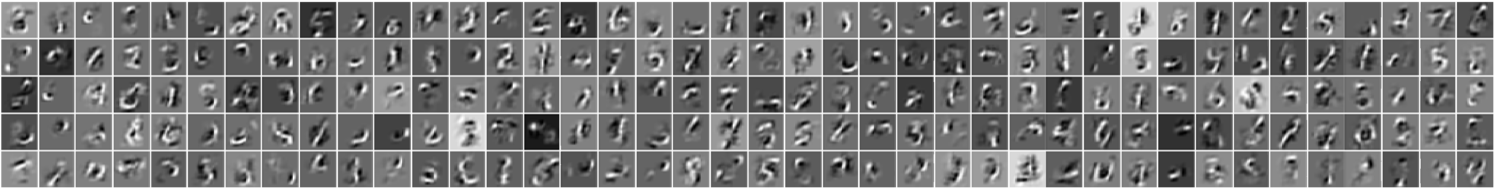}
	\end{minipage}
\caption{ML-CSC model trained on the MNIST dataset. a) The local filters of the dictionary $\D_1$. b) The local filters of the effective dictionary $\D^{(2)} = \D_1\D_2$. c) Some of the 1024 local atoms of the effective dictionary $\D^{(3)}$ which, because of the dimensions of the filters and the strides, are global atoms of size $28\times28$.}
\label{fig:MnistModel}
\end{center}
\end{figure*}

The parameters of the models involve the $\ell_1$ penalty of the deepest representation, i.e. $\lambda$, and the parameter for each dictionary, $\zeta_i$. The first parameter can be set manually or determined so as to obtain a given given representation error. On the other hand, the dictionary-wise $\zeta_i$ parameters are less intuitive to establish, and the question of how to set these values for a given learning scenario remains a subject of current research. Nevertheless, we will show in the experimental section that setting these manually results in effective constructions.

Note this approach can also be employed to minimize Problem \eqref{Eq:SecondLearningProblemm} by introducing minor modifications: In the sparse coding stage, the Lasso is replaced by a $\ell_{0,\infty}$ pursuit, which can be tackled with a greedy alternative as the OMP (as described in Theorem \ref{Thm:StabilityPursuitOMP}) or by an Iterative Hard Thresholding alternative \cite{Blumensath2008}. In addition, one could consider employing the $\ell_1$ norm as a surrogate for the $\ell_0$ penalty imposed on the dictionaries. In this case, their update can still be performed by the same projected gradient descent approach, though replacing the hard thresholding with its soft counterpart.

Before moving on, and even though an exhaustive computational complexity analysis is out of the scope of this paper, we want to briefly comment on the general aspects of the algorithm's complexity. For a particular network architecture (number of layers, number of filters per layer, filter sizes, etc) let us denote by $\mathcal{C}$ the complexity of applying the forward pass -- or in other words, multiplying by $\D^{(L)^T}$ -- on an input image, or a minibach (i.e., for each $k^{th}$ iteration). The sparse coding step in our algorithm is carried with iterative shrinkage methods, and assuming these algorithms are run for $\tau$ iterations, the complexity incurred in each sparse coding step is\footnote{Each such iteration actually involves the application of a forward and backward pass, resulting from the fact that one needs to apply $\D^{(L)}$ and $\D^{(L)^T}$.} $\mathcal{O}(\tau\mathcal{C})$. The update of the dictionaries, on the other hand, requires computing the gradient for each set of filters. Each of these gradients involves, roughly speaking, the computation of yet another forward and backward pass\footnote{The dictionary gradients can actually be computed more efficiently if intermediate computations are saved (and stored), incurring in $\mathcal{O}(L\log_2(L))$ convolution operators. Thus, in this case the dictionary update stage is $\mathcal{O}(\log_2(L)T \mathcal{C})$. We defer the implementation of this more efficient algorithm for future work.}. In this way, the dictionary update stage is $\mathcal{O}(LT \mathcal{C})$.
Note that we are disregarding the shrinkage operators both on the representations and on the filters, which are entry-wise operations that are negligible when compared to applying $\D^{(L)}$ or its transpose. As can be seen, the complexity of our algorithm is approximately $(\tau+TL)$ times that of a similar CNNs architectures. Finally, note that we are not considering the (important) fact that, in our case, the convolutional kernels are sparse, and as such they may incur in significantly cheaper computations. This precise analysis, and how to maximize the related computational benefit, is left for future work.

\subsection{Connection to related works}

Naturally, the proposed algorithm has tight connections to several recent dictionary learning approaches. For instance, our learning formulation is closely related to the Chasing Butterflies approach in \cite{Lemagoarou15}, and our resulting algorithm can be interpreted as a particular case of the FAUST method, proposed in the inspiring work from \cite{le2016flexible}. FAUST decomposes linear operators into sparse factors in a hierarchical way in the framework of a batch learning algorithm, resulting in improved complexity. Unlike that work, our multi-layer decompositions are not only sparse but also convolutional, and they are updated within a stochastic optimization framework. 
The work in \cite{Chabiron2013}, on the other hand, proposed a learning approach where the dictionary is expressed as a cascade of convolutional filters with sparse kernels, and they effectively showed how this approach can be used to approximate large-dimensional analytic atoms such as those from wavelets and curvelets. 
As our proposed approach effectively learns a sparse dictionary, our work also shares similarities with the double-sparsity work from \cite{Rubinstein2010}. In particular, in its Trainlets version \cite{Sulam2016}, the authors proposed to learn a dictionary as a sparse combination of cropped wavelets atoms. From the previous comment on the work from \cite{Chabiron2013}, this could also potentially be expressed as a product of sparse convolutional atoms. All these works, as well as our approach, essentially enforce extra regularization into the dictionary learning problem. As a result, these methods perform better in cases with corrupted measurements, in high dimensional settings, and in cases with limited amount of training data (see \cite{Rubinstein2010,Sulam2016}). 

What is the connection between this learning formulation and that of deep convolutional networks? Recalling the analysis presented in \cite{Papyan2016convolutional}, the Forward Pass is nothing but a layered non-negative thresholding algorithm, the simplest form of a pursuit for the ML-CSC model with layer-wise deviations. Therefore, if the pursuit for $\hat{\gama}_L$ in our setting is solved with such an algorithm, then the problem in \eqref{Eq:ThirdLearningProblemm} \emph{implements a convolutional neural network with only one RELU operator at the last layer, with sparse-enforcing penalties on the filters.} Moreover, due the data-fidelity term in our formulation, the proposed optimization problem provides nothing but a convolutional sparse autoencoder. As such, our work is related to the extensive literature in this topic. For instance, in \cite{ng2011sparse}, sparsity is enforced in the hidden activation layer by employing a penalty term proportional to the KL divergence between the hidden unit marginals and a target sparsity probability. 

Other related works include the k-sparse autoencoders \cite{makhzani2013k}, where the hidden layer is constrained to having exactly $k$ non-zeros. In practice, this boils down to a hard thresholding step of the hidden activation, and the weights are updated with gradient descent. In this respect, our work can be thought of a generalization of this work, where the pursuit algorithm is more sophisticated than a simple thresholding operation, and where the filters are composed by a cascade of sparse convolutional filters. 
More recently, the work in \cite{makhzani2015winner} proposed the \emph{winner-take-all} autoencoders. In a nutshell, these are non-symmetric autoencoders having a few convolutional layers (with ReLu non-linearities) as the encoder, and a simple linear decoder. Sparsity is enforced in what the authors refer to as ``spatial'' and a ``lifetime'' sparsity. 

Finally, and due to the fact that our formulation effectively provides a convolutional network with sparse kernels, our approach is reminiscent of works attempting to sparsify the filters in deep learning models. For instance, the work in \cite{liu2015sparse} showed that the weights of learned deep convolutional networks can be sparsified without considerable degradation of classification accuracy. Nevertheless, one should perpend the fact that these works are motivated merely by cheaper and faster implementations, whereas our model is intrinsically built by theoretically justified sparse kernels. We do not attempt to compare our approach to such sparsifying methods at this stage, and we defer this to future work.

In light of all these previous works, the practical contribution of the learning algorithm presented here is to demonstrate, as we will see in the following Experiments section, that our online block-coordinate descent method can be effectively deployed in an unsupervised setting competing favorably with state of the art dictionary learning and convolutional network auto-encoders approaches.

\section{Experiments}
\label{sec:experiments}
We now provide experimental results to demonstrate several aspects of the ML-CSC model. As a case-study, we consider the MNIST dataset \cite{MNIST}. We define our model as consisting of 3 convolutional layers: the first one contains 32 local filters of size $7\times 7$ (with a stride of 2), the second one consists of 128 filters of dimensions $5\times5\times32$ (with a stride of 1), and the last one contains 1024 filters of dimensions $7\times7\times128$. At the third layer, the effective size of the atoms is 28 -- representing an entire digit. 

Training is performed with Algorithm \ref{Alg:ML-CDL}, using a mini-batch of 100 samples per iteration. For the Sparse Coding stage, we leverage an efficient implementation of FISTA \cite{beck2009fast}, and we adjust the penalty parameter $\lambda$ to obtain roughly 15 non-zeros in the deepest representation $\gama_3$. The $\zeta_i$ parameters, the penalty parameters for the dictionaries sparsity levels, are set manually for simplicity. In addition, and as it is commonly done in various Gradient Descent methods, we employ a momentum term for the update of the dictionaries $\D_i$ within the projected gradient descent step in Algorithm \ref{Alg:ML-CDL}, and set its memory parameter to 0.9. The step size is set to 1, the update dictionary iterations is set as $T=1$, $\iota = 0.001$, and we run the algorithm for 20 epochs, which takes approximately 30 minutes. Our implementation uses the Matconvnet library, which leverages efficient functions for GPU\footnote{All experiments are run on a 16 i7 cores Windows station with a NVIDIA GTX 1080 Ti.}. No pre-processing was performed, with the exception of the subtraction of the mean image (computed on the training set).

We depict the evolution of the Loss function during training in Figure \ref{fig:Training}, as well as the sparsity of the second and third dictionaries (i.e., 1 minus the number of non-zero coefficients in the filters relative to the filters dimension) and the average residual norm. The resulting model is depicted in Figure \ref{fig:MnistModel}. One can see how the first layer is composed of very simple small-dimensional edges or blobs. The second dictionary, $\D_2$, is effectively $99\%$ sparse, and its non-zeros combine a few atoms from $\D_1$ in order to create slightly more complex edges, as the ones in the effective dictionary $\D^{(2)}$. Lastly, $\D_3$ is $99.8\%$ sparse, and it combines atoms from $\D^{(2)}$ in order to provide atoms that resemble different kinds (or parts) of digits. These final global atoms are nothing but a linear combination of local small edges by means of convolutional sparse kernels. 


\begin{figure}
\begin{center}
		\includegraphics[trim = 20 0 20 0, width = .49\textwidth]{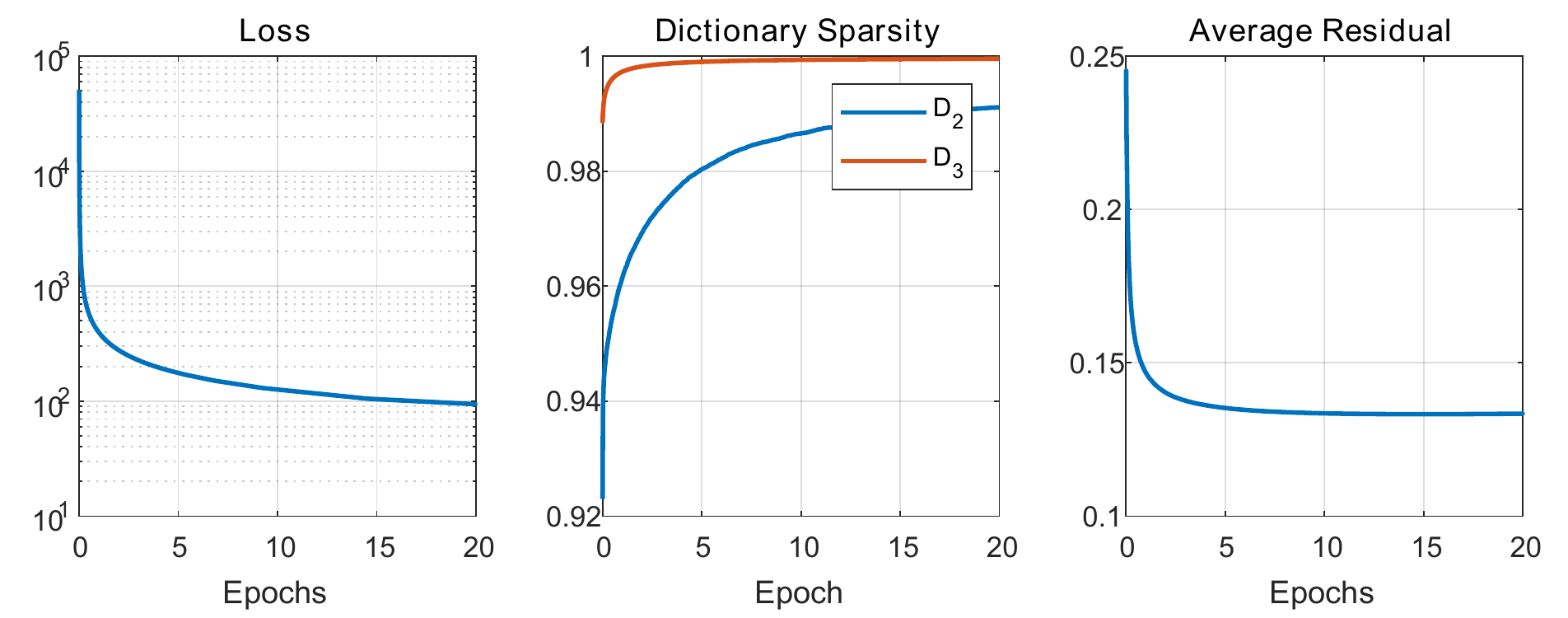}
\caption{Evolution of the Loss function, sparsity of the convolutional dictionaries and average residual norm during training on the MNIST dataset.}
\label{fig:Training}
\end{center}
\end{figure}

Interestingly, we have observed that the mutual coherence of the effective dictionaries do not necessarily increase with the layers, and they often decrease with the depth.
While this measure relates to worst-case analysis conditions and do not mean much in the context of practical performance, one can see that the effective dictionary indeed becomes less correlated as the depth increases. This is intuitive, as very simple edges -- and at every location -- are expected to show large inner products, larger than the correlation of two more complex number-like structures. This effect can be partially explained by the dictionary redundancy: having 32 local filters in $\D_1$ (even while using a stride of 2) implies a 8-fold redundancy in the effective dictionary at this level. This redundancy decreases with the depth (at this least for the current construction), and at the third layer one has \emph{merely} 1024 atoms (redundancy of about 1.3, since the signal dimension is $28^2$).

\begin{figure}
	\begin{center}
		\includegraphics[width = .49\textwidth]{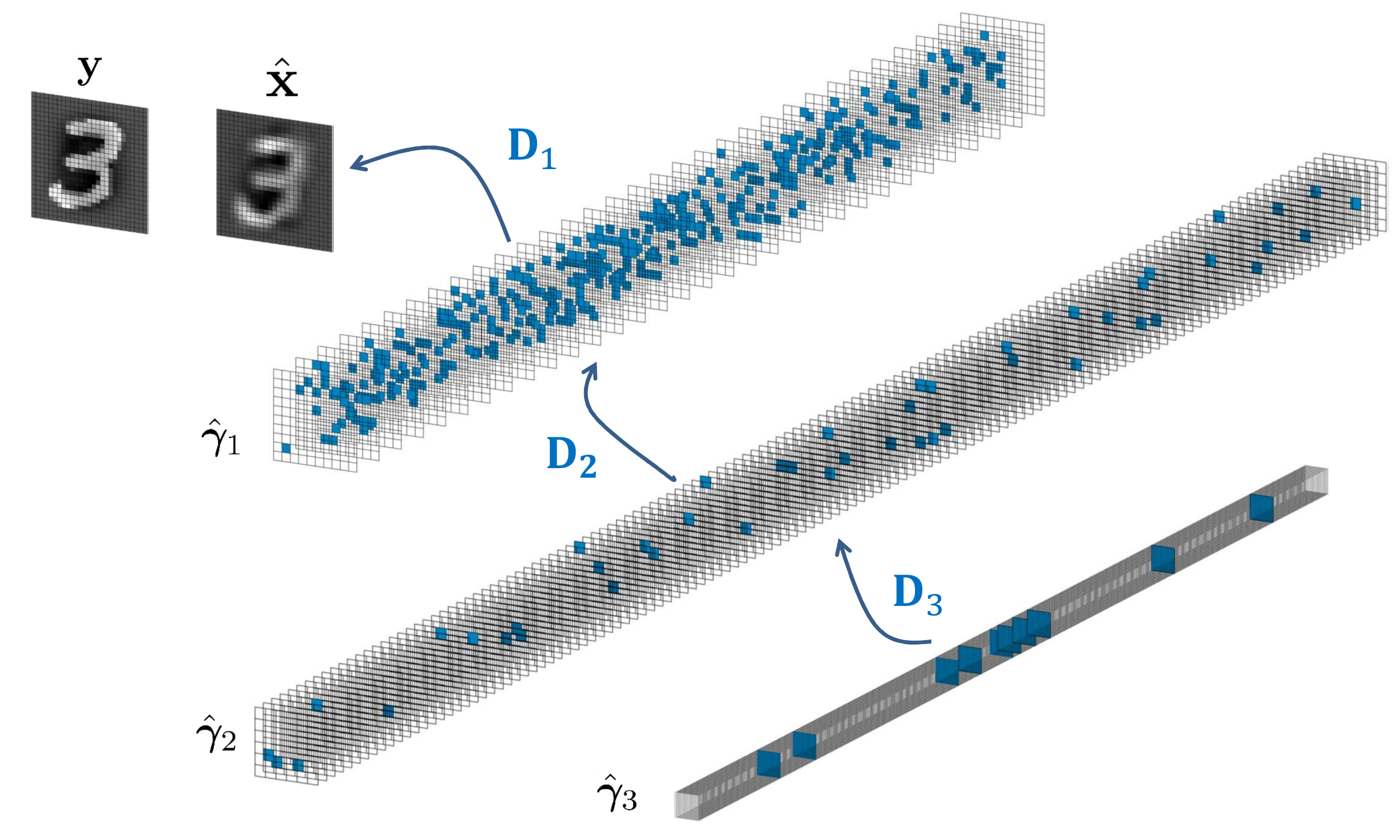}
		\caption{Decompositions of an image from MNIST in terms of its nested sparse features $\gama_i$ and multi-layer convolutional dictionaries $\D_i$.}
		\label{fig:NumberDecomposition}
	\end{center}
\end{figure}
We can also find the multi-layer representation for real images -- essentially solving the projection problem $\PM$. In Figure \ref{fig:NumberDecomposition}, we depict the multi-layer features $\gama_i$, $i = 1,2,3$, obtained with the Algorithm \ref{Alg:MulilayerPursuit}, that approximate an image $\y$ (not included in the training set). Note that all the representations are notably sparse thanks to the very high sparsity of the dictionaries $\D_2$ and $\D_3$. These decompositions (any of them) provide a sparse decomposition of the number 3 at different scales, resulting in an approximation $\hat{\x}$. Naturally, the quality of the approximation can be improved by increasing the cardinality of the representations.


\subsection{Sparse Recovery}

The first experiment we explore is that of recovering sparse vectors from corrupted measurements, in which we will compare the presented ML-CSC Pursuit with the Layered approach from \cite{Papyan2016convolutional}. For the sake of completion and understanding, we will first carry this experiment in a synthetic setting and then on projected real digits, leveraging the dictionaries obtained in the beginning of this section.

We begin by constructing a 3 layers ``non-convolutional'' \footnote{The non-convolutional case is still a ML-CSC model, in which the signal dimension is the same as the length of the atoms $n$, and with a stride of the same magnitude $n$. We choose this setting for the synthetic experiment to somewhat favor the results of the layered pursuit approach.} model for signals of length $200$, with the dictionaries having 250, 300, and 350 atoms, respectively. The first dictionary is constructed as a random matrix, whereas the remaining ones are composed of sparse atoms with random supports and a sparsity of $99\%$. Finally, 500 representations are sampled by drawing sparse vectors $\gama_L$, with a target sample sparsity $k$ and normally distributed coefficients. We generate the signals as $\x = \D^{(i)}\gama_i$, and then corrupt them with Gaussian noise ($\sigma = 0.02$) obtaining the measurements $\y = \x(\gama_i) + \v$. 

\begin{figure}
	\centering
	\begin{subfigure}{.5\textwidth}
		\includegraphics[trim = 30 40 30 0, width = \textwidth]{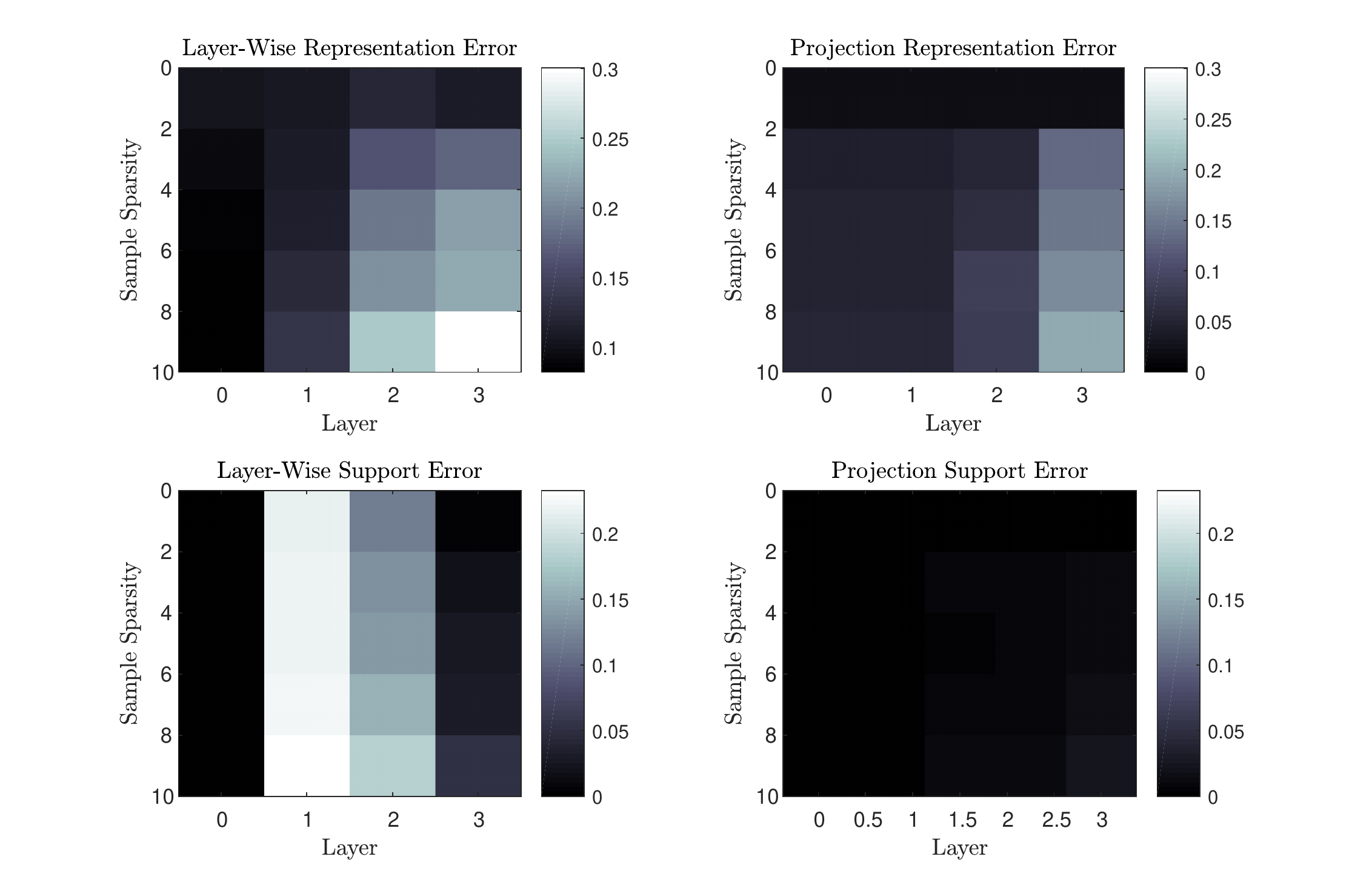} \\
		\caption{\footnotesize Synthetic signals.}
		\label{fig:SyntheticExperiment}
	\end{subfigure}
		

	\begin{subfigure}{.5\textwidth}
		\includegraphics[trim = 30 40 30 -20, width = \textwidth]{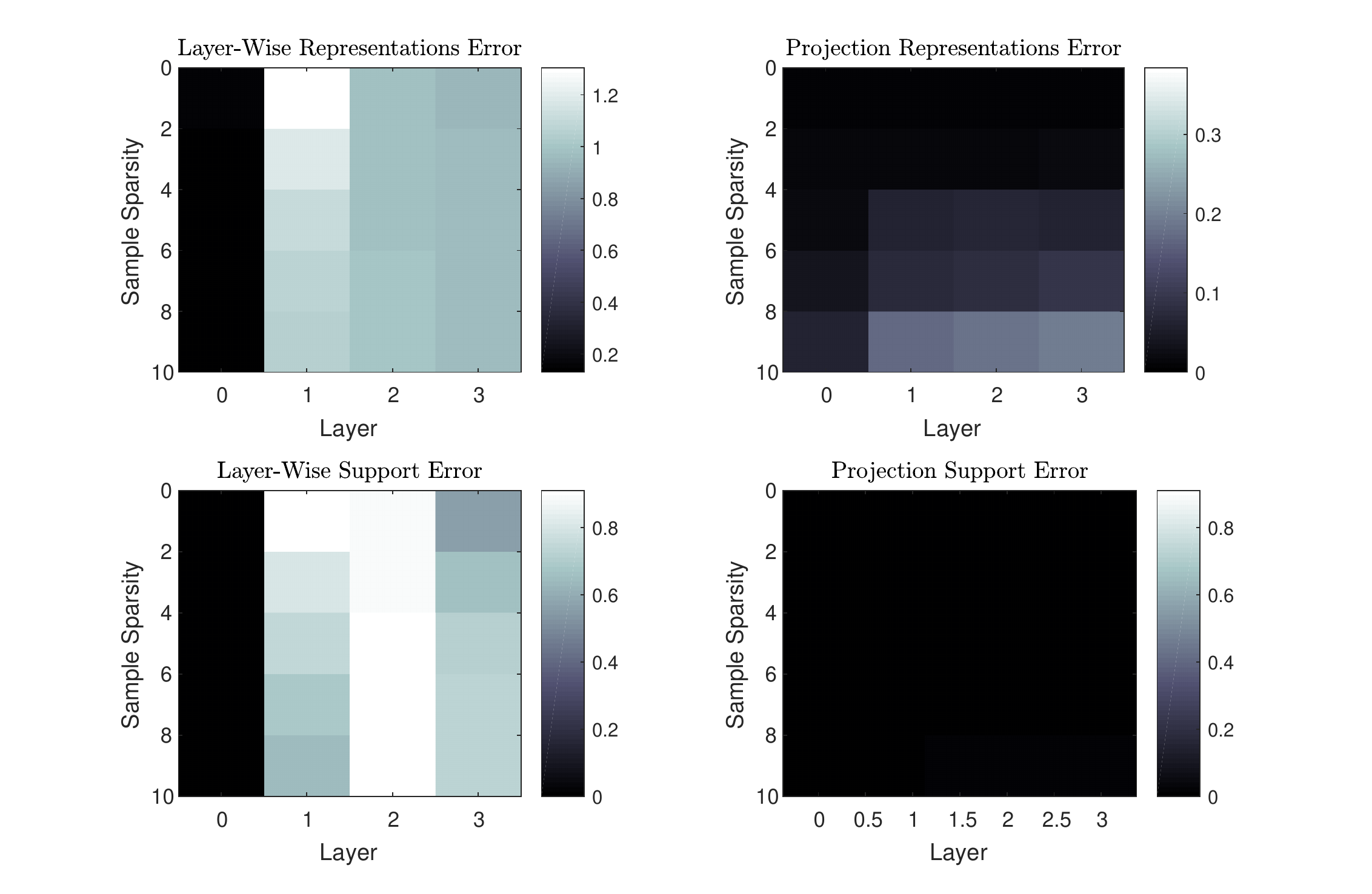} \\
		\caption{\footnotesize MNIST signals.}
		\label{fig:Recovery_MnistExperiment}
	\end{subfigure}
	\caption{Recovery of representations from noisy MNIST digits. Top: normalized $\ell_2$ error between the estimated and the true representations. Bottom: normalized intersection between the estimated and the true support of the representations.}
\end{figure}

In order to evaluate our projection approach, we run Algorithm \ref{Alg:MulilayerPursuit} employing the Subspace Pursuit algorithm \cite{dai2009subspace} for the sparse coding step, with the oracle target cardinality $k$. Recall that once the deepest representations $\hat{\gama}_L$ have been obtained, the inner ones are simply computed as $\hat{\gama}_{i-1} = \D_i\hat{\gama}_i$. In the layered approach from \cite{Papyan2016convolutional}, on the other hand, the pursuit of the representations progresses sequentially: first running a pursuit for $\hat{\gama}_1$, then employing this estimate to run another pursuit for $\hat{\gama}_2$, etc. In the same spirit, we employ Subspace Pursuit layer by layer, employing the oracle cardinality of the representation at each stage. The results are presented in Figure \ref{fig:SyntheticExperiment}: at the top we depict the relative $\ell_2$ error of the recovered representations ($\|\hat{\gama}_i- {\gama}_i\|_2 / \| {\gama}_i\|_2$) and, at the bottom, the normalized intersection of the supports \cite{Elad_Book}, both as a function of the sample cardinality $k$ and the layer depth. 

The projection algorithm manages to retrieve the representations $\hat{\gama}_i$ more accurately than the layered pursuit, as evidenced by the $\ell_2$ error and the support recovery. The main reason behind the difficulty of the layer-by-layer approach is that the entire process relies on the correct recovery of the first layer representations, $\hat{\gama}_1$. If these are not properly estimated (as evidenced by the bottom-left graph), there is little hope for the recovery of the deeper ones. In addition, these representations $\gama_1$ are the least sparse ones, and so they are expected to be the most challenging ones to recover. The projection alternative, on the other hand, relies on the estimation of the deepest $\hat{\gama}_L$, which are very sparse. Once these are estimated, the remaining ones are simply computed by propagating them to the shallower layers. Following our analysis in the Section \ref{sec:StabilityPursuits}, if the support of $\hat{\gama}_L$ is estimated correctly, so will be the support of the remaining representations $\hat{\gama}_i$.

We now turn to deploy the 3 layer convolutional dictionaries for real digits obtained previously. To this end we take 500 test digits from the MNIST dataset and project them on the trained model, essentially running Algorithm \ref{Alg:MulilayerPursuit} and obtaining the representations $\gama_i$. We then create the noisy measurements as $\y = \D^{(i)}\gama_i + \v$, where $\v$ is Gaussian noise with $\sigma = 0.02$. We then repeat both pursuit approaches to estimate the underlying representations, obtaining the results reported in Figure \ref{fig:Recovery_MnistExperiment}. 

Clearly, this represents a significantly more challenging scenario for the layered approach, which recovers only a small fraction of the correct support of the sparse vectors. The projection algorithm, on the other hand, provides accurate estimations with negligible mistakes in the estimated supports, and very low $\ell_2$ error. Note that the $\ell_2$ error has little significance for the Layered approach, as this algorithm does not manage to find the true supports.
The reason for the significant deterioration in the performance of the Layered algorithm is that this method actually finds alternative representations $\hat{\gama}_1$, of the same sparsity, providing a lower fidelity term than the projection counterpart for the first layer. However, these estimates $\hat{\gama}_1$ do not necessarily provide a signal in the model, which causes further errors when estimating $\hat{\gama}_2$. 

\subsection{Sparse Approximation}

A straight forward application for unsupervised learned model is that of approximation: how well can one approximate or reconstruct a signal given only a few $k$ non-zero values from some representation? In this subsection, we study the performance of the ML-CSC model for this task while comparing with related methods, and we present the results in Figure \ref{fig:MtermApp}. The model is trained on $60K$ training examples, and the M-term approximation is measured on the remaining $10K$ testing samples. All of the models are designed with 1K hidden units (or atoms).

Given the close connection of the ML-CSC model to sparse auto-encoders, we present the results obtained by approximating the signals with sparse autoencoders \cite{ng2011sparse} and k-sparse autoencoders \cite{makhzani2013k}. In particular, the work in \cite{ng2011sparse} trains sparse auto-encoders by penalizing the KL divergence between the activation distribution of the hidden neurons and that of a binomial distribution with a certain target activation rate. As such, the resulting activations are never truly sparse. For this reason, since the M-term approximation is computed by picking the highest entries in the hidden neurons and setting the remaining ones to zero, this method exhibits a considerable representation error.

\begin{figure}
	\begin{center}
		\includegraphics[trim = 10 0 0 10, width = .49\textwidth]{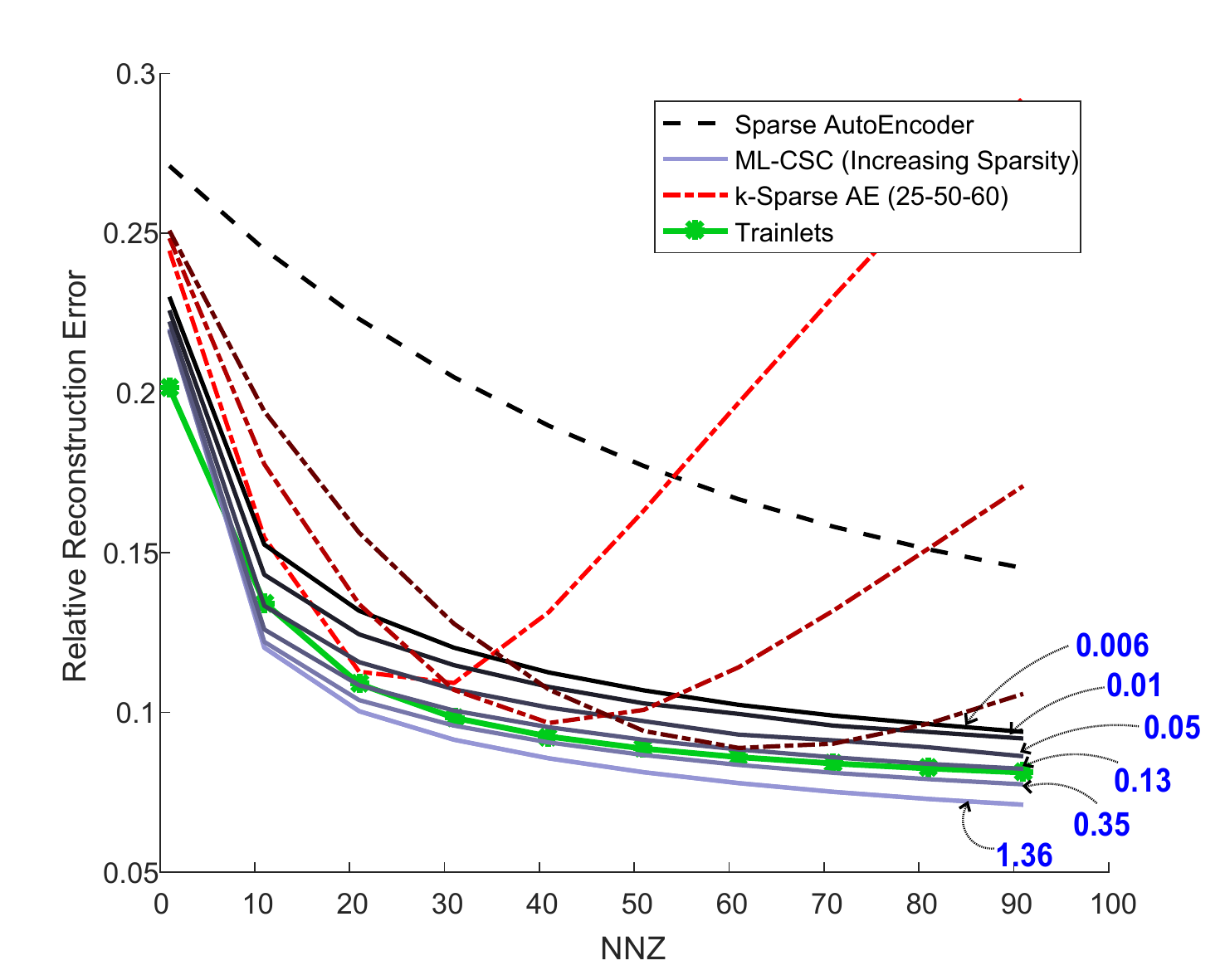}
		\caption{M-term approximation for MNIST digits, comparing sparse autoencoders \cite{ng2011sparse}, k-sparse autoencoders \cite{makhzani2013k}, trainlets (OSDL) \cite{Sulam2016}, and the proposed ML-CSC for models with different filter sparsity levels. The relative number of parameters is depicted in blue. }
		\label{fig:MtermApp}
	\end{center}
\end{figure}

K-sparse auto-encoders perform significantly better, though they are sensitive to the number of non-zeros used during training. Indeed, if the model is trained with 25 non-zeros per sample, the model performs well for a similar range of cardinalities. Despite this sensitivity on training, their performance is remarkable considering the simplicity of the pursuit involved: the reconstruction is done by computing $\hat{\x} = \W\hat{\gama}_k+\b'$, where $\hat{\gama}_k$ is a k-sparse activation (or feature) obtained by hard thresholding as $\hat{\gama}_k = H_k\left[ \W^T\y + \b \right]$, and where $\b$ and $\b'$ are biases vectors. Note that while a convolutional multi-layer version of this family of autoencoders was proposed in \cite{makhzani2015winner}, these constructions are trained in stacked manner -- i.e., training the first layer independently, then training the second one to represent the features of the first layer while introducing pooling operations, and so forth. In this manner, each layer is trained to represent the (pooled) features from the previous layer, but the entire architecture cannot be directly employed for comparison in this problem.

Regarding the ML-CSC, we trained 6 different models by enforcing 6 different levels of sparsity in the convolutional filters (i.e., different values of the parameters $\zeta_i$ in Algorithm \ref{Alg:ML-CDL}), with a fixed target sparsity of $k=10$ non-zeros. The sparse coding of the inner-most $\hat{\gama}_3$ was done with the Iterative Hard Thresholding algorithm, in order to guarantee an exact number of non-zeros. The numbers pointing at the different models indicate the relative amount of parameters in the model, where 1 corresponds to $28^2\times 1K$ parameters required in a standard autoencoder (this is also the number of parameters in the sparse-autoencoders and k-sparse autoencoders, without counting the biases). As one can see, the larger the number of parameters, the lower the representation error the model is able to provide. In particular, the ML-CSC yields slightly better representation error than that of k-sparse autoencoders, for a wide range of non-zero values (without the need to train different models for each one) and \emph{with 1 and 2 orders of magnitude less parameters}.

Since the training of the ML-CSC model can also be understood as a dictionary learning algorithm, we compare here with the state-of-the-art method of \cite{Sulam2016}. For this case, we trained 1K trainlet atoms with the OSDL algorithm. Note that this comparison is interesting, as OSDL also provides sparse atoms with reduced number of parameters. For the sake of comparison, we employed an atom-sparsity that results in 13$\%$ of parameters relative to the total model size (just as one of the trained ML-CSC models), and the sparse coding was done also with the IHT algorithm. Notably, the performance of this relatively sophisticated dictionary learning method, which leverages the representation power of a cropped wavelets base dictionary, is only slightly superior to the proposed ML-CSC.

\subsection{Unsupervised Classification}

Unsupervised trained models are usually employed as feature extractors, and a popular way to assess the quality of such features is to train a linear classifier on them for a certain classification task. While the intention of this paper is not to provide a state-of-the-art unsupervised learning algorithm, we simply intent to demonstrate that the learned model generalizes to unseen examples, providing meaningful representations. To this end, we train a model with 3 layers, each containing: 16 ($5\times5$) atoms, 64 ($5\times5\times16$) atoms and 1024 atoms of dimension $5\times5\times64$ (stride of 2) on 60K training samples from MNIST. Just as for the previous model, the global sparse coding is performed with FISTA and a target (average) sparsity of 25 non-zeros. Once trained, we compute the representations $\hat{\gamma}_i$ with an elastic net formulation and non-negativity constraints, before fitting a simple linear classifier on the obtained features. 
Employing an elastic-net formulation (by including an $\ell_2$ regularization parameter, in addition to the $\ell_1$ norm) results in slightly denser representations, with improved classification performance. Similarly, the non-negativity constraint significantly facilitates the classification by linear classifiers. We compare our results with similar methods under the same experimental setup, and we depict the results in Table \ref{Table:UnsupervisedResults}, reporting the classification error on the 10K testing samples. 

Recall that within the ML-CSC model, all features $\gama_i$ have a very clear meaning: they provide a sparse representation at a different layer and scale. We can leverage this multi-layer decomposition in a very natural way within this unsupervised classification framework. We detail the classification performance achieved by our model in two different scenarios: on the first one we employ the
1K-dimensional features corresponding to the second layer of the ML-CSC model, obtaining better performance than the equivalent k-sparse autoencoder. In the second case, we add to the previous features the 1K-dimensional features from the third layer, resulting in a classification error of $1.15\%$, comparable to the Stacked Winner Take All (WTA) autoencoder (with the same number of neurons). 

\begin{table}[]
		\centering \small
		\begin{tabular}{|l|c|}
		\hline 
		\multicolumn{1}{|c|}{\textbf{Method}} & \textbf{Test Error} \\ \hline
		Stacked Denoising Autoencoder (3 layers) \cite{vincent2010stacked} & 1.28\%                        \\ \hline
		k-Sparse Autoencoder (1K units) \cite{makhzani2013k}		  & 1.35\%                        \\ \hline
		Shallow WTA Autoencoder (2K units) \cite{makhzani2015winner}  & 1.20\%                        \\ \hline
		Stacked WTA Autoencoder (2K units)\cite{makhzani2015winner}   & 1.11\%                        \\ \hline
		ML-CSC (1K units) - 2nd Layer Rep.                   & 1.30\%                        \\ \hline
		ML-CSC (2K units) - 2nd\&3rd Layer Rep.              & 1.15\%                        \\ \hline 
		\end{tabular}
		\caption{Unsupervised classification results on MNIST.}
		\label{Table:UnsupervisedResults}
	\end{table}

Lastly, it is worth mentioning that a stacked version of convolutional WTA autoencoder \cite{makhzani2015winner} achieve a classification error of 0.48, providing significantly better results. However, note that this model is trained with a 2-stage process (training the layers separately) involving significant pooling operations between the features at different layers. More importantly, the features computed by this model are 51,200-dimensional (more than an order of magnitude larger than in the other models) and thus cannot be directly compared to the results reporter by our method. In principle, similar stacked-constructions that employ pooling could be built for our model as well, and this remains as part of ongoing work.
 
\section{Conclusion}
\label{sec:conclusions}

We have carefully revisited the ML-CSC model and explored the problem of projecting a signal onto it. In doing so, we have provided new theoretical bounds for the solution of this problem as well as stability results for practical algorithms, both greedy and convex. The search for signals within the model led us to propose a simple, yet effective, learning formulation adapting the dictionaries across the different layers to represent natural images. 
We demonstrated the proposed approach on a number of practical applications, showing that the ML-CSC can indeed provide significant expressiveness with a very small number of model parameters. 

Several question remain open: how should the model be modified to incorporate pooling operations between the layers? what consequences, both theoretical and practical, would this have? How should one recast the learning problem in order to address supervised and semi-supervised learning scenarios? Lastly, we envisage that the analysis provided in this work will empower the development of better practical and theoretical tools not only for structured dictionary learning approaches, but to the field of deep learning and machine learning in general.

\section{Acknowledgments}
The research leading to these results has received funding from the European Research Council under European Unions Seventh Framework Programme, ERC Grant agreement no. 320649. J. Sulam kindly thanks J. Turek for fruitful discussions.



\bibliographystyle{ieeetr}
\bibliography{MyBib}

\begin{thebibliography}{10}

\bibitem{Bruckstein2009}
A.~M. Bruckstein, D.~L. Donoho, and M.~Elad, ``{From Sparse Solutions of
  Systems of Equations to Sparse Modeling of Signals and Images},'' {\em SIAM
  Review.}, vol.~51, pp.~34--81, Feb. 2009.

\bibitem{Rubinstein2010_dict}
R.~Rubinstein, A.~M. Bruckstein, and M.~Elad, ``Dictionaries for sparse
  representation modeling,'' {\em IEEE Proceedings - Special Issue on
  Applications of Sparse Representation \& Compressive Sensing}, vol.~98,
  no.~6, pp.~1045--1057, 2010.

\bibitem{Sulam2014}
J.~Sulam, B.~Ophir, and M.~Elad, ``{Image Denoising Through Multi-Scale Learnt
  Dictionaries},'' in {\em IEEE International Conference on Image Processing},
  pp.~808 -- 812, 2014.

\bibitem{Romano2014}
Y.~Romano, M.~Protter, and M.~Elad, ``Single image interpolation via adaptive
  nonlocal sparsity-based modeling,'' {\em IEEE Trans. on Image Process.},
  vol.~23, no.~7, pp.~3085--3098, 2014.

\bibitem{Mairal2009}
J.~Mairal, F.~Bach, and G.~Sapiro, ``{Non-local Sparse Models for Image
  Restoration},'' {\em IEEE International Conference on Computer Vision.},
  vol.~2, pp.~2272--2279, 2009.

\bibitem{Jiang2013}
Z.~Jiang, Z.~Lin, and L.~S. Davis, ``Label consistent k-svd: Learning a
  discriminative dictionary for recognition,'' {\em Pattern Analysis and
  Machine Intelligence, IEEE Transactions on}, vol.~35, no.~11, pp.~2651--2664,
  2013.

\bibitem{patel2014dictionaries}
V.~M. Patel, Y.-C. Chen, R.~Chellappa, and P.~J. Phillips, ``Dictionaries for
  image and video-based face recognition,'' {\em JOSA A}, vol.~31, no.~5,
  pp.~1090--1103, 2014.

\bibitem{shrivastava2014multiple}
A.~Shrivastava, V.~M. Patel, and R.~Chellappa, ``Multiple kernel learning for
  sparse representation-based classification,'' {\em IEEE Transactions on Image
  Processing}, vol.~23, no.~7, pp.~3013--3024, 2014.

\bibitem{lecun1990handwritten}
Y.~LeCun, B.~E. Boser, J.~S. Denker, D.~Henderson, R.~E. Howard, W.~E. Hubbard,
  and L.~D. Jackel, ``Handwritten digit recognition with a back-propagation
  network,'' in {\em Advances in neural information processing systems},
  pp.~396--404, 1990.

\bibitem{rumelhart1988learning}
D.~E. Rumelhart, G.~E. Hinton, R.~J. Williams, {\em et~al.}, ``Learning
  representations by back-propagating errors,'' {\em Cognitive modeling},
  vol.~5, no.~3, p.~1, 1988.

\bibitem{lecun2015deep}
Y.~LeCun, Y.~Bengio, and G.~Hinton, ``Deep learning,'' {\em Nature}, vol.~521,
  no.~7553, pp.~436--444, 2015.

\bibitem{bruna2013invariant}
J.~Bruna and S.~Mallat, ``Invariant scattering convolution networks,'' {\em
  IEEE transactions on pattern analysis and machine intelligence}, vol.~35,
  no.~8, pp.~1872--1886, 2013.

\bibitem{patel2015probabilistic}
A.~B. Patel, T.~Nguyen, and R.~G. Baraniuk, ``A probabilistic theory of deep
  learning,'' {\em arXiv preprint arXiv:1504.00641}, 2015.

\bibitem{cohen16Shashua}
N.~Cohen, O.~Sharir, and A.~Shashua, ``On the expressive power of deep
  learning: A tensor analysis,'' in {\em 29th Annual Conference on Learning
  Theory} (V.~Feldman, A.~Rakhlin, and O.~Shamir, eds.), vol.~49 of {\em
  Proceedings of Machine Learning Research}, (Columbia University, New York,
  New York, USA), pp.~698--728, PMLR, 23--26 Jun 2016.

\bibitem{gregor2010learning}
K.~Gregor and Y.~LeCun, ``Learning fast approximations of sparse coding,'' in
  {\em Proceedings of the 27th International Conference on Machine Learning
  (ICML-10)}, pp.~399--406, 2010.

\bibitem{xin2016maximal}
B.~Xin, Y.~Wang, W.~Gao, D.~Wipf, and B.~Wang, ``Maximal sparsity with deep
  networks?,'' in {\em Advances in Neural Information Processing Systems},
  pp.~4340--4348, 2016.

\bibitem{Papyan2016convolutional}
V.~Papyan, Y.~Romano, and M.~Elad, ``Convolutional neural networks analyzed via
  convolutional sparse coding,'' {\em The Journal of Machine Learning
  Research}, vol.~18, no.~1, pp.~2887--2938, 2017.

\bibitem{Lemagoarou15}
L.~Le~Magoarou and R.~Gribonval, ``{Chasing butterflies: In search of efficient
  dictionaries},'' in {\em IEEE Int. Conf. Acoust. Speech, Signal Process},
  Apr. 2015.

\bibitem{Chabiron2013}
O.~Chabiron, F.~Malgouyres, J.~Tourneret, and N.~Dobigeon, ``{Toward Fast
  Transform Learning},'' {\em International Journal of Computer Vision},
  pp.~1--28, 2015.

\bibitem{Sulam2016}
J.~Sulam, B.~Ophir, M.~Zibulevsky, and M.~Elad, ``Trainlets: Dictionary
  learning in high dimensions,'' {\em IEEE Transactions on Signal Processing},
  vol.~64, no.~12, pp.~3180--3193, 2016.

\bibitem{ng2011sparse}
A.~Ng, ``Sparse autoencoder,'' {\em CS294A Lecture notes}, vol.~72, no.~2011,
  pp.~1--19, 2011.

\bibitem{makhzani2013k}
A.~Makhzani and B.~Frey, ``K-sparse autoencoders,'' {\em arXiv preprint
  arXiv:1312.5663}, 2013.

\bibitem{makhzani2015winner}
A.~Makhzani and B.~J. Frey, ``Winner-take-all autoencoders,'' in {\em Advances
  in Neural Information Processing Systems}, pp.~2791--2799, 2015.

\bibitem{WorkingLocallyThinkingGlobally}
V.~Papyan, J.~Sulam, and M.~Elad, ``Working locally thinking globally:
  Theoretical guarantees for convolutional sparse coding,'' {\em IEEE
  Transactions on Signal Processing}, vol.~65, no.~21, pp.~5687--5701, 2017.

\bibitem{sermanet2013pedestrian}
P.~Sermanet, K.~Kavukcuoglu, S.~Chintala, and Y.~LeCun, ``Pedestrian detection
  with unsupervised multi-stage feature learning,'' in {\em Computer Vision and
  Pattern Recognition (CVPR), 2013 IEEE Conference on}, pp.~3626--3633, IEEE,
  2013.

\bibitem{li2013convolutional}
K.~Li, L.~Gan, and C.~Ling, ``Convolutional compressed sensing using
  deterministic sequences,'' {\em IEEE Transactions on Signal Processing},
  vol.~61, no.~3, pp.~740--752, 2013.

\bibitem{zhang2016convolutional}
H.~Zhang and V.~M. Patel, ``Convolutional sparse coding-based image
  decomposition.,'' in {\em BMVC}, 2016.

\bibitem{zhang2018convolutional}
H.~Zhang and V.~M. Patel, ``Convolutional sparse and low-rank coding-based
  image decomposition,'' {\em IEEE Transactions on Image Processing}, vol.~27,
  no.~5, pp.~2121--2133, 2018.

\bibitem{Papyan_2017_ICCV}
V.~Papyan, Y.~Romano, J.~Sulam, and M.~Elad, ``Convolutional dictionary
  learning via local processing,'' in {\em The IEEE International Conference on
  Computer Vision (ICCV)}, Oct 2017.

\bibitem{Heide2015}
F.~Heide, W.~Heidrich, and G.~Wetzstein, ``Fast and flexible convolutional
  sparse coding,'' in {\em Computer Vision and Pattern Recognition (CVPR), 2015
  IEEE Conference on}, pp.~5135--5143, IEEE, 2015.

\bibitem{choudhury2017consensus}
B.~Choudhury, R.~Swanson, F.~Heide, G.~Wetzstein, and W.~Heidrich, ``Consensus
  convolutional sparse coding,'' in {\em Computer Vision (ICCV), 2017 IEEE
  International Conference on}, pp.~4290--4298, IEEE, 2017.

\bibitem{henaff2011unsupervised}
M.~Henaff, K.~Jarrett, K.~Kavukcuoglu, and Y.~LeCun, ``Unsupervised learning of
  sparse features for scalable audio classification.,'' in {\em ISMIR},
  vol.~11, p.~2011, Citeseer, 2011.

\bibitem{szlam2011structured}
A.~D. Szlam, K.~Gregor, and Y.~L. Cun, ``Structured sparse coding via lateral
  inhibition,'' in {\em Advances in Neural Information Processing Systems},
  pp.~1116--1124, 2011.

\bibitem{Wohlberg2016}
B.~Wohlberg, ``Efficient algorithms for convolutional sparse representations,''
  {\em IEEE Transactions on Image Processing}, vol.~25, pp.~301--315, Jan.
  2016.

\bibitem{liu2017online}
J.~Liu, C.~Garcia-Cardona, B.~Wohlberg, and W.~Yin, ``Online convolutional
  dictionary learning,'' {\em arXiv preprint arXiv:1709.00106}, 2017.

\bibitem{zeiler2010deconvolutional}
M.~D. Zeiler, D.~Krishnan, G.~W. Taylor, and R.~Fergus, ``Deconvolutional
  networks,'' in {\em Computer Vision and Pattern Recognition (CVPR), 2010 IEEE
  Conference on}, pp.~2528--2535, IEEE, 2010.

\bibitem{szlam2010convolutional}
A.~Szlam, K.~Kavukcuoglu, and Y.~LeCun, ``Convolutional matching pursuit and
  dictionary training,'' {\em arXiv preprint arXiv:1010.0422}, 2010.

\bibitem{kavukcuoglu2010learning}
K.~Kavukcuoglu, P.~Sermanet, Y.-L. Boureau, K.~Gregor, M.~Mathieu, and Y.~L.
  Cun, ``Learning convolutional feature hierarchies for visual recognition,''
  in {\em Advances in neural information processing systems}, pp.~1090--1098,
  2010.

\bibitem{he2014unsupervised}
Y.~He, K.~Kavukcuoglu, Y.~Wang, A.~Szlam, and Y.~Qi, ``Unsupervised feature
  learning by deep sparse coding,'' in {\em Proceedings of the 2014 SIAM
  International Conference on Data Mining}, pp.~902--910, SIAM, 2014.

\bibitem{Candes2005}
E.~J. Candes and T.~Tao, ``Decoding by linear programming,'' {\em Information
  Theory, IEEE Transactions on}, vol.~51, no.~12, pp.~4203--4215, 2005.

\bibitem{liu2015sparse}
B.~Liu, M.~Wang, H.~Foroosh, M.~Tappen, and M.~Pensky, ``Sparse convolutional
  neural networks,'' in {\em Proceedings of the IEEE Conference on Computer
  Vision and Pattern Recognition}, pp.~806--814, 2015.

\bibitem{Blumensath2008}
T.~Blumensath and M.~E. Davies, ``{Iterative Thresholding for Sparse
  Approximations},'' {\em Journal of Fourier Analysis and Applications},
  vol.~14, pp.~629--654, Sept. 2008.

\bibitem{le2016flexible}
L.~Le~Magoarou and R.~Gribonval, ``Flexible multilayer sparse approximations of
  matrices and applications,'' {\em IEEE Journal of Selected Topics in Signal
  Processing}, vol.~10, no.~4, pp.~688--700, 2016.

\bibitem{Rubinstein2010}
R.~Rubinstein, M.~Zibulevsky, and M.~Elad, ``{Double Sparsity : Learning Sparse
  Dictionaries for Sparse Signal Approximation},'' {\em IEEE Trans. Signal
  Process.}, vol.~58, no.~3, pp.~1553--1564, 2010.

\bibitem{MNIST}
Y.~LeCun, L.~Bottou, Y.~Bengio, and P.~Haffner, ``Gradient-based learning
  applied to document recognition,'' {\em Proceedings of the IEEE}, vol.~86,
  no.~11, pp.~2278--2324, 1998.

\bibitem{beck2009fast}
A.~Beck and M.~Teboulle, ``A fast iterative shrinkage-thresholding algorithm
  for linear inverse problems,'' {\em SIAM journal on imaging sciences},
  vol.~2, no.~1, pp.~183--202, 2009.

\bibitem{dai2009subspace}
W.~Dai and O.~Milenkovic, ``Subspace pursuit for compressive sensing signal
  reconstruction,'' {\em IEEE Transactions on Information Theory}, vol.~55,
  no.~5, pp.~2230--2249, 2009.

\bibitem{Elad_Book}
M.~Elad, {\em Sparse and Redundant Representations: From Theory to Applications
  in Signal and Image Processing}.
\newblock Springer Publishing Company, Incorporated, 1st~ed., 2010.

\bibitem{vincent2010stacked}
P.~Vincent, H.~Larochelle, I.~Lajoie, Y.~Bengio, and P.-A. Manzagol, ``Stacked
  denoising autoencoders: Learning useful representations in a deep network
  with a local denoising criterion,'' {\em Journal of Machine Learning
  Research}, vol.~11, no.~Dec, pp.~3371--3408, 2010.

\end{thebibliography}


\appendix 

\newtheorem{innercustomgeneric}{\customgenericname}
\providecommand{\customgenericname}{}
\newcommand{\newcustomtheorem}[2]{%
	\newenvironment{#1}[1]
	{%
		\renewcommand\customgenericname{#2}%
		\renewcommand\theinnercustomgeneric{##1}%
		\innercustomgeneric
	}
	{\endinnercustomgeneric}
}

\newcustomtheorem{customlemma}{Lemma}
\newcustomtheorem{customdef}{Definition}

\subsection{Properties of the ML-CSC model}
\label{app:MLCSCisCSC}

\begin{customlemma}{1} \label{lemma:MLCSCisCSC}
	Given the ML-CSC model described by the set of convolutional dictionaries $\{\D_i\}_{i=1}^L$, with filters of spatial dimensions $n_i$ and channels $m_i$, any dictionary $\D^{(i)} = \D_1 \D_2 \dots \D_i$ is a convolutional dictionary with $m_i$ local atoms of dimension $n_i^{\text{eff}} = \sum_{j=1}^{i} n_j - (i-1)$. In other words, the ML-CSC model is a structured global convolutional model.
\end{customlemma}
\begin{proof} 
	A convolutional dictionary is formally defined as the concatenation of banded circulant matrices. Consider $\D_1 = \left[ \C^{(1)}_1, \C^{(1)}_2, \dots, \C^{(1)}_{m_1}\right]$, where each circulant $\C^{(1)}_i \in \mathbb{R}^{N\times N}$. Likewise, one can express $\D_2 = \left[ \C^{(2)}_1, \C^{(2)}_2, \dots, \C^{(2)}_{m_2}\right]$, where $\C^{(2)}_i \in \mathbb{R}^{Nm_1\times N}$. Then,
	\begin{equation}
	\D^{(2)} = \D_1\D_2 = \left[ \D_1 \C^{(2)}_1, \D_1\C^{(2)}_2, \dots, \D_1\C^{(2)}_{m_2}\right].
	\end{equation}
	Each term $\D_1\C^{(2)}_i$ is the product of a concatenation of banded circulant matrices and a banded circulant matrix. Because the atoms in each $\C^{(2)}_i$ have a stride of $m_1$ (the number of filters in $\D_1$) each of these products is in itself a banded circulant matrix. This is illustrated in Figure \ref{fig:proof_csc}, where it becomes clear that the first atom in $\C^{(2)}_1$ (of length $n_2m_1$) linearly combines atoms from the first $n_2$ blocks of $m_1$ filters in $\D_1$ (in this case $n_2 = 2$). These block are simply the unique set of filters shifted at every position. The second column in $\C^{(2)}_1$ will do the same for the next set $n_2$ blocks, starting from the second one, etc. 
	
	From the above discussion, $\D_1\C^{(2)}_1$ results in a banded circulant matrix of dimension $N\times N$. In particular, the band of this matrix is given by the dimension of the filters in the first dictionary ($n_1$) plus the number of blocks combined by $\C^{(2)}_1$ minus one. In other words, the effective dimension of the filters in $\D_1\C^{(2)}_1$ is given by $n^\text{eff}_2 = n_2+n_1-1$.
	
	The effective dictionary $\D^{(2)} = \D_1\D_2$ is simply a concatenation of $m_2$ such banded circulant matrices. In other words, $\D^{(2)}$ is a convolutional dictionary with filters of dimension $n_2^\text{eff}$. The same analysis can be done for the effective dictionary at every layer, $\D^{(i)}$, resulting in an effective dimension of $n^{\text{eff}}_i = n_i + n^{\text{eff}}_{i-1}-1$, and so $n_L^{\text{eff}} = \sum_{i=1}^{L} n_i - (L-1)$. 
	
	
	\begin{figure}
		\begin{center}
			\includegraphics[trim = 100 135 100 80, width = .45\textwidth]{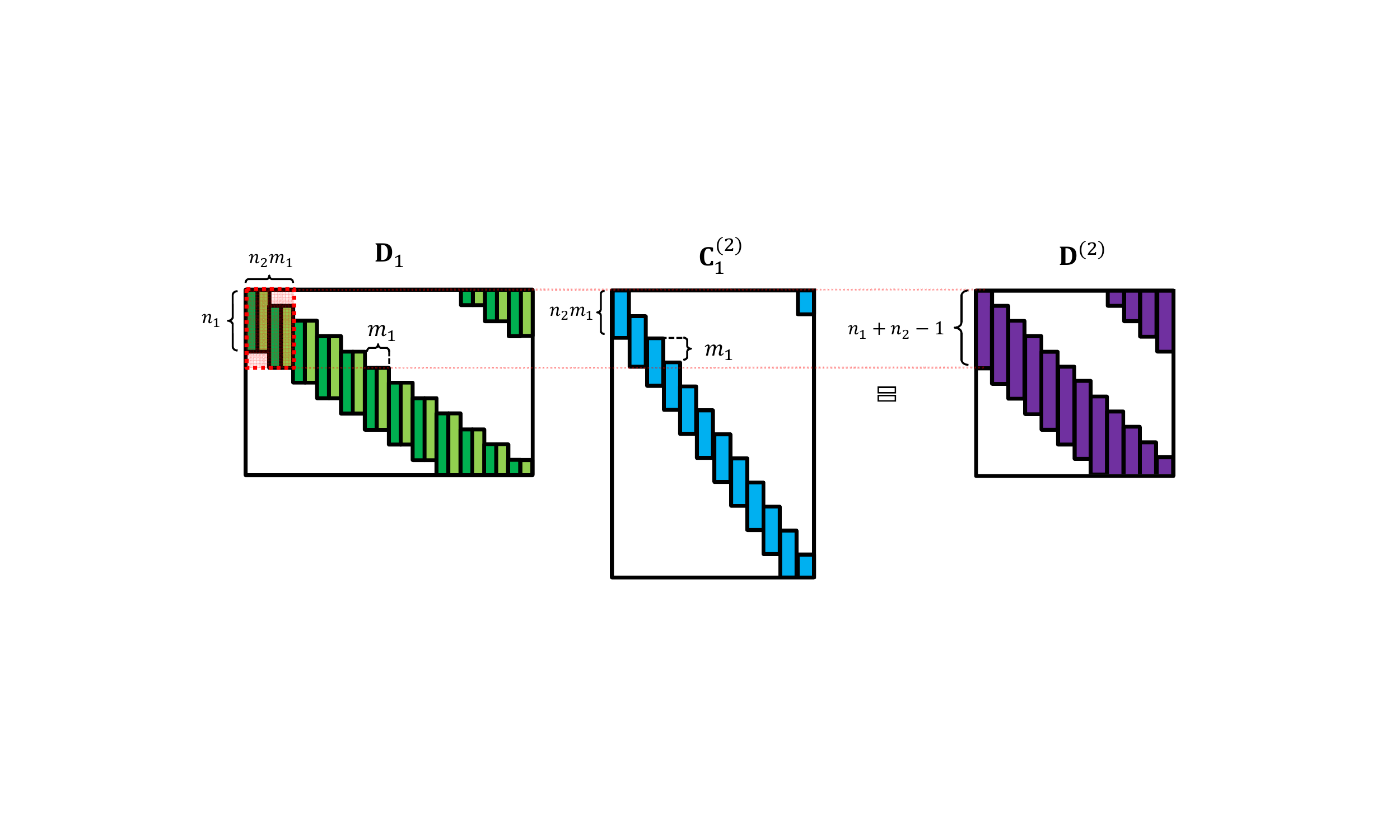}
		\end{center}
		\caption{Illustration of a convolutional dictionary $\D_1$ multiplied by one of the circulat matrices from $\D_2$, in this case $\C^{(2)}_1$.}
		\label{fig:proof_csc}
	\end{figure}
	
	
	Finally, note that $\D^{(i)}$ has $N m_i$ columns, and thus there will be $m_i$ local filters in the effective CSC model.
\end{proof}

\subsection{Stability result for the $\PM$ problem}
\label{app:StabilityforPM}
\begin{customthm}{4}{ Stability of the solution to the $\PM$ problem:} \label{Thm:GlobalStability} \\
	Suppose $\x(\gama_i) \in \M_\lamda$ is observed through $\y = \x+ \v$, where $\v$ is a bounded noise vector, $\|\v\|_2 \leq \mathcal{E}_0$, and 
	$\|\gama_i\|^s_{0,\infty} = \lambda_i < \frac{1}{2}\left(1+\frac{1}{\mu(\D^{(i)})}\right)$, for $1\leq i \leq L$. Consider the set $\{\hat{\gama}_i\}_{i=1}^{L}$ to be the solution of the $\PM$ problem. Then,
	\begin{equation} \label{Eq:DCPEStability}
	\| \gama_i- \hat{\gama}_i \|_2^2 \leq \frac{4\mathcal{E}_{0}^2}{1-(2\|\gama_{i}\|^s_{0,\infty}-1)\mu(\D^{(i)})} .
	\end{equation}
\end{customthm}

\begin{proof}
	Denote the solution to the $\PM$ problem by $\hat{\x}$; i.e., $\hat{\x} = \D^{(i)}\hat{\gama}_i$.  Given that the original signal $\x$ satisfies $\|\y-\x\|_2\leq \mathcal{E}_0$,  the solution to the $\PM$ problem, $\hat{\x}$ must satisfy 
	\begin{equation}
	\|\y-\hat{\x}\|_2\leq \|\y- \x \|_2\leq \mathcal{E}_0,
	\end{equation}
	as this is the signal which provides the shortest $\ell_2$ (data-fidelity) distance from $\y$. Note that because $\hat{\x}(\gama_i)\in \M_{\lamda}$, we can have that $\hat{\x} = \D^{(i)}\hat{\gama}_i$, $\forall\ 1\leq i \leq L$. Recalling Lemma \ref{lemma:MLCSCisCSC}, the product $\D_1\D_2\dots\D_i$ is a convolutional dictionary. In addition, we have required that $\|\hat{\gama}_i\|^s_{0,\infty} \leq \lambda_i < \frac{1}{2}\left(1+\frac{1}{\mu(\D^{(i)})}\right) $. Therefore, from the same arguments presented in \cite{WorkingLocallyThinkingGlobally}, it follows that
	\begin{equation}
	\|\gama_i-\hat{\gama}_i\|_2^2\leq \frac{4\mathcal{E}_0^2}{1-(2\|\gama_i\|^s_{0,\infty}-1)\mu(\D^{(i)})}.
	\end{equation}
	
\end{proof}

\label{app:AnotherStabilityforPM}
\begin{thm}{(Another stability of the solution to the $\PM$ problem):}\\
	Suppose $\x(\gama_i) \in \M_\lamda$ is observed through $\y = \x+ \v$, where $\v$ is a bounded noise vector, $\|\v\|_2 \leq \mathcal{E}_0$, and \footnote{The assumption that $\|\gama_i\|^s_{0,\infty} = \lambda_i$ can be relaxed to $\|\gama_i\|^s_{0,\infty} \leq \lambda_i$, with slight modifications of the result.} $\|\gama_i\|^s_{0,\infty} = \lambda_i < \frac{1}{2}\left(1+\frac{1}{\mu(\D_i)}\right)$, for $1\leq i \leq L$. Consider the set $\{\hat{\gama}_i\}_{i=1}^{L}$ to be the solution of the $\PM$ problem. If $\| {\gama}_L \|^s_{0,\infty} < \frac{1}{2} \left( 1 + \frac{1}{\mu(\D^{(L)})}\right)$ then 
	\begin{multline} \label{Eq:DCPEStability}
	\| \gama_i- \hat{\gama}_i \|_2^2 \leq \frac{4\mathcal{E}_{0}^2}{1-(2\|\gama_{L}\|^s_{0,\infty}-1)\mu(\D^{(L)})} \\ \prod_{j=i+1}^{L} \left[1 + (2\|\gama_{j}\|^s_{0,\infty} -1)\mu(\D_{j}) \right].
	\end{multline}
\end{thm}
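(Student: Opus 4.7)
The plan is to first apply the already-established stability result at the deepest layer, and then propagate the resulting error back to shallower layers one step at a time, paying a controlled multiplicative penalty at each backward step. The multi-layer structure makes this natural: since $\hat{\x}(\hat{\gama}_i)\in\M_\lamda$, every estimate $\hat{\gama}_j$ shares the sparsity regime of $\gama_j$, and passing through $\D_j$ only distorts an $\ell_2$ quantity by an amount controlled by $\mu(\D_j)$ and the stripe sparsity.

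First I would establish the base case at layer $L$. Because $\hat{\x}$ is the $\ell_2$-closest signal in $\M_\lamda$ to $\y$, we immediately get $\|\y-\hat{\x}\|_2\le\|\y-\x\|_2\le\mathcal{E}_0$, so $\|\x-\hat{\x}\|_2\le 2\mathcal{E}_0$. Writing $\x=\D^{(L)}\gama_L$ and $\hat{\x}=\D^{(L)}\hat{\gama}_L$, and invoking Lemma \ref{lemma:MLCSCisCSC} to view $\D^{(L)}$ as a convolutional dictionary with coherence $\mu(\D^{(L)})$, the hypothesis $\|\gama_L\|^s_{0,\infty}<\tfrac12(1+1/\mu(\D^{(L)}))$ exactly places us in the regime where the standard $\ell_{0,\infty}$ stability argument from \cite{WorkingLocallyThinkingGlobally} applies. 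This yields
\begin{equation*}
\|\gama_L-\hat{\gama}_L\|_2^2\;\le\;\frac{4\mathcal{E}_0^2}{1-(2\|\gama_L\|^s_{0,\infty}-1)\,\mu(\D^{(L)})},
\end{equation*}
which is the $i=L$ instance of the claim (empty product equals $1$).

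Next, I would propagate the bound backwards. Since $\gama_{j-1}=\D_j\gama_j$ and $\hat{\gama}_{j-1}=\D_j\hat{\gama}_j$ (the latter because $\hat{\x}\in\M_\lamda$), we have $\gama_{j-1}-\hat{\gama}_{j-1}=\D_j(\gama_j-\hat{\gama}_j)$. Both $\gama_j$ and $\hat{\gama}_j$ have stripe sparsity at most $\lambda_j=\|\gama_j\|^s_{0,\infty}$, so their difference satisfies $\|\gama_j-\hat{\gama}_j\|^s_{0,\infty}\le 2\|\gama_j\|^s_{0,\infty}$. The convolutional version of the Ger\v{s}gorin bound (the standard upper S-RIP-type estimate for CSC dictionaries, again from \cite{WorkingLocallyThinkingGlobally}) then yields
\begin{equation*}
\|\D_j(\gama_j-\hat{\gama}_j)\|_2^2\;\le\;\bigl[1+(2\|\gama_j\|^s_{0,\infty}-1)\mu(\D_j)\bigr]\,\|\gama_j-\hat{\gama}_j\|_2^2,
\end{equation*}
and the hypothesis $\lambda_j<\tfrac12(1+1/\mu(\D_j))$ guarantees that this multiplicative factor is positive and strictly below $2$. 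Iterating this inequality from $j=L$ down to $j=i+1$ and multiplying by the base-case bound delivers exactly the claimed estimate.

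The only step requiring some care is the sparsity book-keeping across layers: one must verify that $\hat{\gama}_j$ indeed satisfies $\|\hat{\gama}_j\|^s_{0,\infty}\le\lambda_j$ (this is immediate from $\hat{\x}\in\M_\lamda$) and that the union of two stripe-sparse supports still obeys the stripe count $2\lambda_j$, which is precisely why the local $\ell_{0,\infty}$ norm, rather than the global $\ell_0$, is the right quantity here. Everything else reduces to the two standard CSC ingredients invoked above, assembled telescopically. The tightest bound arises at $i=L$, and each backward step multiplies by a factor greater than one, explaining the monotone looseness toward shallower layers --- exactly opposite to the behavior of the $\DCPE$ stability result.
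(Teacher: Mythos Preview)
Your proposal is correct and follows essentially the same route as the paper: establish the base case at layer $L$ via the CSC stability result from \cite{WorkingLocallyThinkingGlobally} applied to the effective dictionary $\D^{(L)}$, then propagate backward using $\gama_{j-1}-\hat{\gama}_{j-1}=\D_j(\gama_j-\hat{\gama}_j)$, bounding each step by the upper S-RIP constant $\delta_{2\lambda_j}\le(2\lambda_j-1)\mu(\D_j)$ after observing that the difference has stripe sparsity at most $2\lambda_j$. The paper's proof is structurally identical, differing only in that it does not spell out the intermediate triangle-inequality step $\|\x-\hat\x\|_2\le 2\mathcal{E}_0$ (it invokes the cited stability theorem directly).
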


\begin{proof}
	Given that the original signal $\x$ satisfies $\|\y-\x\|_2\leq \mathcal{E}_0$,  the solution to the $\PM$ problem, $\hat{\x}$ must satisfy 
	\begin{equation}
	\|\y-\hat{\x}\|_2\leq \|\y- \x \|_2\leq \mathcal{E}_0,
	\end{equation}
	as this is the signal which provides a lowest $\ell_2$ (data-fidelity) term. In addition, $\|\hat{\gama}_L\|^s_{0,\infty} = \lambda_L < \frac{1}{2}(1+\frac{1}{\mu(\D^{(L)})}) $. Therefore, from the same arguments presented in \cite{WorkingLocallyThinkingGlobally}, it follows that
	\begin{equation}
	\|\gama_L-\hat{\gama}_L\|_2^2\leq \frac{4\mathcal{E}_0^2}{1-(2\|\gama_L\|^s_{0,\infty}-1)\mu(\D^{(L)})} = \mathcal{E}^2_L.
	\end{equation}
	Because the solution $\hat{\x}(\{\hat{\gama}_i\})\in\M_\lamda$, then $\hat{\gama}_{L-1} = \D_L \hat{\gama}_L$. Therefore
	\begin{equation}
	\|\gama_{L-1}-\hat{\gama}_{L-1}\|_2^2 = \|\D_L (\gama_{L}-\hat{\gama}_{L})\|_2^2 \leq (1+\delta_{2k})\|\gama_{L}-\hat{\gama}_{L}\|_2^2,
	\end{equation}
	where $\delta_{2k}$ is the S-RIP of $\D_L$ with constant $2k = 2\|\gama_L\|^s_{0,\infty}$. This follows from the triangle inequality of the $\Loi$ norm and the fact that, because $\hat{\gama}_L$ is a solution to the $\PM$ problem, $\|\hat{\gama}_L\|^s_{0,\infty} \leq \lambda_L = \|{\gama}_L\|^s_{0,\infty}$. The S-RIP can in turn be bounded with the mutual coherence \cite{WorkingLocallyThinkingGlobally} as $\delta_k \leq (k-1) \mu(\D_L)$, from which one obtains
	\begin{equation}
	\|\gama_{L-1}-\hat{\gama}_{L-1}\|_2^2 \leq \mathcal{E}^2_L\ (1+(2\|\gama_L\|^s_{0,\infty}-1)\mu(\D_L)).
	\end{equation}
	From similar arguments, extending this to an arbitrary $i^{th}$ layer, 
	\begin{equation}
	\| \gama_i- \hat{\gama}_i \|_2^2 \leq \mathcal{E}^2_L \prod_{j=i+1}^{L} (1 + (2\|\gama_j\|^s_{0,\infty} -1)\mu(\D_{j})).
	\end{equation}
\end{proof}

For the sake of simplicity, one can relax the above bounds further obtaining that, subject to the assumptions in Theorem \ref{Thm:GlobalStability}, 
\begin{equation} \label{Eq:DCPEStability_simple}
\| \gama_i- \hat{\gama}_i \|_2^2 \leq \mathcal{E}^2_L\ 2^{(L-i)}.
\end{equation}
This follows simply by employing the fact that $\| \gama_i \|^s_{0,\infty} < \frac{1}{2} \left( 1 + \frac{1}{\mu(\D_i)} \right)$.

\subsection{Local stability of the S-RIP}
\label{app:LocalStabilitySRIP}

\begin{customlemma}{2}{Local one-sided near isometry:} \label{lemma:LocalSRIP} \\
	If $\D$ is a convolutional dictionary satisfying the Stripe-RIP condition with constant $\delta_k$, then
	\begin{equation}
	\|\D\gama\|^{2,p}_{2,\infty} \leq (1+\delta_k)\ \|\gama\|^{2,s}_{2,\infty}
	\end{equation}
\end{customlemma}

\begin{proof} 
	Consider the patch-extraction operator $\mathbf{P}_i$ from the signal $\x = \D\gama$, and $\mathbf{S}_i$ the operator that extracts the corresponding stripe from $\gama$ such that $\mathbf{P}_i \x = \O \mathbf{S}_i \gama$, where $\O$ is a local stripe dictionary \cite{WorkingLocallyThinkingGlobally}. Denote the $i^{th}$ stripe by $\mathbf{s}_i = \mathbf{S}_i \gama$. Furthermore, denote by $\bar{\mathbf{S}}_i$ the operator that \emph{extracts the support} of $\mathbf{s}_i$ from $\gama$. Clearly, $ \x = \D \bar{\mathbf{S}}_i^T \bar{\mathbf{S}}_i \gama$. Note that $\| \mathbf{P}_i \|_2 = \|\mathbf{S}_i\|_2 = 1$.
	Then,
	\begin{align}
	\|\D\gama\|^p_{2,\infty}  & =  \underset{i}{\max} \|\mathbf{P}_i\D \bar{\mathbf{S}}_i^T \bar{\mathbf{S}}_i \gama\|_2 \\
	& \leq  \underset{i}{\max} \|\mathbf{P}_i\|_2 \ \| \D\bar{\mathbf{S}}^T_i \bar{\mathbf{S}}_i \gama\|_2 \\
	& \leq  \underset{i}{\max} \| \D\bar{\mathbf{S}}_i^T \|_2 \| \bar{\mathbf{S}}_i \gama\|_2 \\
	& \leq  \underset{i}{\max} \| \D\bar{\mathbf{S}}_i^T \|_2 \ \underset{j}{\max} \| \bar{\mathbf{S}}_j \gama\|_2.
	\end{align}
	
	Note that 
	\begin{equation}
	\underset{j}{\max} \| \bar{\mathbf{S}}_j \gama\|_2 = \underset{j}{\max} \| \mathbf{S}_j \gama\|_2 = \| \gama\|^s_{2,\infty},
	\end{equation}
	as the non-zero entries in $\bar{\mathbf{S}}_j \gama$ and $\mathbf{S}_j \gama$ are the same. On the other hand, denoting by $\lambda_{max}(\cdot)$ the maximal eigenvalue of the matrix in its argument, $\| \D\bar{\mathbf{S}}_i^T \|_2 = \sqrt{\lambda_{max}\left( \bar{\mathbf{S}}_i \D^T\D \bar{\mathbf{S}}_i^T \right)}$, and if $\mathcal{T} = Supp(\gama)$,
	\begin{equation} \label{eq:EigenInequality}
	\lambda_{max}\left( \bar{\mathbf{S}}_i \D^T\D \bar{\mathbf{S}}_i^T \right) \leq \lambda_{max}\left( \D_\mathcal{T}^T\D_\mathcal{T} \right),
	\end{equation}
	because\footnote{The inequality in \eqref{eq:EigenInequality} can be shown by considering the equivalent expression $\lambda_{max}\left( \mathbf{S}_i\D^T_\mathcal{T}\D_\mathcal{T}\mathbf{S}^T_i\right)$, where the matrix $\D^T_\mathcal{T}\D_\mathcal{T}$ is real and symmetric, and the matrix $\mathbf{S}_i$ is semi-orthogonal; i.e. $\mathbf{S}_i \mathbf{S}^T_i = \mathbf{I}$. Thus, from Poincar\'e Separation Theorem, $\lambda_{min}\left( \D^T_\mathcal{T}\D_\mathcal{T} \right) \leq \lambda\left( \mathbf{S}_i\D^T_\mathcal{T}\D_\tau\mathbf{S}^T_i\right) \leq \lambda_{max}\left( \D^T_\mathcal{T}\D_\mathcal{T} \right)$.} $ \bar{\mathbf{S}}_i \D^T\D \bar{\mathbf{S}}_i^T $ is a principal sub-matrix of $\D_\mathcal{T}^T\D_\mathcal{T}$. Thus, also $\|\D\bar{\mathbf{S}}^T_i\|_2 \leq \| \D_\mathcal{T}\|_2$. 
	
	The Stripe-RIP condition, as in Equation \eqref{Eq:SRIP}, provides a bound on the square of the singular values of $\D_\mathcal{T}$. Indeed, $\| \D_\mathcal{T} \|^2_2 \leq (1+\delta_k)$, for every $\mathcal{T} : \|\mathcal{T}\|^s_{0,\infty} = k$. Including these in the above one obtains the desired claim:
	\begin{equation}
	\|\D\gama\|^p_{2,\infty}  \leq \underset{i}{\max} \| \D\bar{\mathbf{S}}_i^T \|_2 \ \underset{j}{\max} \| \bar{\mathbf{S}}_j \gama\|_2 \leq \sqrt{1+\delta_k} \| \gama \|^s_{2,\infty}.
	\end{equation}
\end{proof}

\subsection{Recovery guarantees for pursuit algorithms}

\subsubsection{Convex relaxation case}
\label{app:GuaranteesConvexRelaxation}

\begin{customthm}{6}{Stable recovery of the Multi-Layer Pursuit Algorithm in the convex relaxation case:} \label{Thm:StabilityPursuitLasso} \\
	Suppose a signal $\x(\gama_i) \in \M_\lamda$ is contaminated with locally-bounded noise $\v$, resulting in $\y = \x + \v$, $\|\v\|^p_{2,\infty} \leq \epsilon_0$. Assume that all representations $\gama_i$ satisfy the N.V.S. property for the respective dictionaries $\D_i$, and that $\|\gama_i\|^s_{0,\infty} = \lambda_i < \frac{1}{2}\left(1+\frac{1}{\mu(\D_i)}\right)$, for $1\leq i \leq L$ and $\|\gama_L\|^s_{0,\infty} = \lambda_L \leq \frac{1}{3}\left(1+\frac{1}{\mu(\D^{(L)})}\right)$. Consider solving the Pursuit stage in Algorithm \ref{Alg:MulilayerPursuit} as
	\begin{equation}
	\hat{\gama}_L = \underset{\gama}{\arg\min} \| \y + \D^{(L)} \gama \||^2_2 + \zeta_L \|\gama\|_1,
	\end{equation}
	for $\zeta_L = 4\epsilon_0$, and set $\hat{\gama}_{i-1} = \D_i\hat{\gama}_i$, $i=L,\dots,1$. 
	\noindent
	Then, for every $1\leq i\leq L$ layer,
	\begin{enumerate}
		\item $Supp(\hat{\gama}_i) \subseteq Supp(\gama_i)$,
		\item $\|\hat{\gama}_i - \gama_i\|^p_{2,\infty} \leq \epsilon_L  \displaystyle\prod\limits_{j=i+1}^{L} \sqrt{\frac{3 c_j}{2}}$,
	\end{enumerate}
	where $\epsilon_L = \frac{15}{2}\ \epsilon_0 \sqrt{\|\gama_j\|^p_{0,\infty}}$ is the error at the last layer, and  $c_j$ is a coefficient that depends on the ratio between the local dimensions of the layers, $c_j = \Bigl\lceil \frac{2n_{j-1}-1}{n_j} \Bigr\rceil$.
\end{customthm}

\begin{proof}
	Denote $ \Delt_i = \hat{\gama}_i - \gama_i $.
	From \cite{WorkingLocallyThinkingGlobally} (Theorem 19), the solution $\hat{\gama}_L$ will satisfy:
	\begin{enumerate}[i)]
		\item $\S(\hat{\gama}_L) \subseteq \S(\gama_L)$; and
		\item $\|\Delt_L\|_\infty \leq  \frac{15}{2}\ \epsilon_0$.
	\end{enumerate}
	
	As shown in \cite{Papyan2016convolutional}, given the $\ell_\infty$ bound of the representation error, we can bound its $\ell_{2,\infty}$ norm as well, obtaining
	\begin{equation}
	\label{Eq:BoundOnEl}
	\| \Delt_L \|^p_{2,\infty} \leq \|\Delt_L\|_\infty \sqrt{\|\Delt_L\|^p_{0,\infty}} \leq \frac{15}{2}\ \epsilon_0 \sqrt{\|\gama_L\|^p_{0,\infty}},
	\end{equation}
	because, since $\S(\hat{\gama}_L) \subset \S(\gama_L)$, $\| \Delt_{L}\|^s_{0,\infty} \leq \|\gama_L\|^s_{0,\infty}$.	 Define $\epsilon_L = \frac{15}{2}\ \epsilon_0 \sqrt{\|\gama_L\|^p_{0,\infty}}$.
	
	Recall that the N.V.S. property states that the entries in $\gama$ will no cause the support of the atoms in $\D$ cancel each other; i.e., $\|\D\gama\|_0 = \|\D_{\mathcal{T}}\|^0_{\infty}$ (Definition \ref{def:N.V.S.Property}). In other words, this provides a bound on the cardinality of the vector resulting from the multiplication of $\D$ with any sparse vector with support $\mathcal{T}$. Concretely, if $\gama$ satisfies the N.V.S., then $\|\D\gama\|_0 \geq \|\D\hat{\gama}\|_0$.
	
	Consider now the estimate at the $L-1$ layer, obtained as $\hat{\gama}_{L-1} = \D_L \hat{\gama}_{L}$. Because $\gama_L$ satisfies the N.V.S. property, and $\S(\hat{\gama}_L) \subseteq \S(\gama_L)$, then $\|\hat{\gama}_{L-1}\|_0 \leq \|\gama_{L-1}\|_0$, and more so  $\S(\hat{\gama}_{L-1}) \subseteq \S(\gama_{L-1})$.

	On the other hand, recalling Lemma \ref{lemma:LocalSRIP} and denoting by $\delta_{\lambda_L}$ the Stripe-RIP constant of the $\D_L$ dictionary, and because $\| \Delt_{L}\|^s_{0,\infty} \leq \|\gama_L\|^s_{0,\infty} \leq \lambda_L$,
	\begin{equation}
	\| \Delt_{L-1} \|^{2,p}_{2,\infty} = \| \D_L \Delt_{L} \|^{2,p}_{2,\infty} \leq (1+\delta_{\lambda_L}) \| \Delt_{L} \|^{2,s}_{2,\infty}.
	\end{equation}
	
	Notice that by employing the above Lemma, we have bounded the \textbf{patch-wise} $\ell_{2,\infty}$ norm of $\Delt_{L-1}$ in terms of the \textbf{stripe-wise} $\ell_{2,\infty}$ of $\Delt_{L}$. 
	Recalling the derivation from \cite{Papyan2016convolutional} (Section 7.1), at each $i^{th}$ layer, a stripe includes up to $(2n_{i-1}-1)/n_i$ patches. Define $c_i = \Bigl\lceil \frac{2n_{i-1}-1}{n_i} \Bigr\rceil$. From this, one can bound the square of the $\ell_2$ norm of a stripe with the norm of the maximal patch within it - this is true for every stripe, and in particular for the stripe with the maximal norm. This implies that $\| \Delt_{L} \|^{2,s}_{2,\infty} \leq c_{L} \| \Delt_{L} \|^{2,p}_{2,\infty}$. Then, 
	\begin{equation}
	\| \Delt_{L-1} \|^{2,p}_{2,\infty} \leq (1+\delta_k) \| \Delt_{L} \|^{2,s}_{2,\infty} \leq (1+\delta_{\lambda_L}) c_{L} \| \Delt_{L} \|^{2,p}_{2,\infty}.
	\end{equation}
	Employing the result in Eq. \eqref{Eq:BoundOnEl},
	\begin{equation}
	\| \Delt_{L-1} \|^{2,p}_{2,\infty} \leq (1+\delta_k) c_L \| \Delt_{L} \|^{2,p}_{2,\infty} \leq  (1+\delta_k)\ c_L \ \epsilon_L^2.
	\end{equation}
	We can further bound the Stripe-RIP constant by $\delta_k \leq (k-1)\mu(\D)$ \cite{WorkingLocallyThinkingGlobally}, obtaining
	\begin{equation}
	\| \Delt_{L-1} \|^{2,p}_{2,\infty} \leq (1+ (\|\gama_L\|^s_{0,\infty}-1)\mu(\D_L) ) \ \epsilon^2_L \ c_L.
	\end{equation}
	Iterating this analysis for the remaining layers yields
	\begin{equation}
	\|\hat{\gama}_i - \gama_i\|^{2,p}_{2,\infty} \leq \epsilon_L^2  \displaystyle\prod\limits_{j=i+1}^{L} c_j\ (1 + (\|\gama_j\|^s_{0,\infty} -1)\mu(\D_{j})).
	\end{equation}
	
	This general result can be relaxed for the sake of simplicity. Indeed, considering that $\|\gama_i\|^s_{0,\infty} < \frac{1}{2}\left(1+\frac{1}{\mu(\D_i)}\right)$, for $1\leq i \leq L$, 
	\begin{equation}
	1 + (\|\gama_j\|^s_{0,\infty} -1)\mu(\D_{j}) < 3/2,
	\end{equation}
	and so 
	\begin{equation}
	\|\hat{\gama}_i - \gama_i\|^p_{2,\infty} \leq \epsilon_L  \displaystyle\prod\limits_{j=i+1}^{L} \sqrt{\frac{3 c_j}{2}}
	\end{equation}
\end{proof}

\subsubsection{Greedy case}
\label{app:StableGuaranteesGreedy}

\begin{customthm}{7}{Stable recovery of the Multi-Layer Pursuit Algorithm in the greedy case:} \label{Thm:StabilityPursuitOMP} \\
	Suppose a signal $\x(\gama_i) \in \M_\lamda$ is contaminated with energy-bounded noise $\v$, such that $\y = \x + \v$, $\|\y-\x\|_2 \leq \mathcal{E}_0$, and $\epsilon_0 = \|\v\|^\pp_{2,\infty}$. Assume that all representations $\gama_i$ satisfy the N.V.S. property for the respective dictionaries $\D_i$, with $\|\gama_i\|^s_{0,\infty} = \lambda_i < \frac{1}{2}\left(1+\frac{1}{\mu(\D_i)}\right)$, for $1\leq i \leq L$, and
	\begin{equation} \label{omp_hypothesis}
	\|\gama_L\|^s_{0,\infty} < \frac{1}{2}\left( 1+\frac{1}{\mu(\D^{(L)})} \right)-\frac{1}{\mu(\D^{(L)})}\cdot\frac{\epsilon_0}{|\gamma_{L}^{min}|},
	\end{equation}
	where $\gamma_{L}^{min}$ is the minimal entry in the support of $\gama_{L}$.
	Consider approximating the solution to the Pursuit step in Algorithm \ref{Alg:MulilayerPursuit} by running Orthogonal Matching Pursuit for $\|\gama_L\|_0$ iterations. Then
	\begin{enumerate}
		\item $Supp(\hat{\gama}_i) \subseteq Supp(\gama_i)$,
		\item $\|\hat{\gama}_i - \gama_i\|^2_2 \leq \frac{\mathcal{E}_0^2}{1-\mu(\D^{(L)})(\|\gama_L\|^s_{0,\infty}-1)} \left(\frac{3}{2}\right)^{L-i}$.
	\end{enumerate}
\end{customthm}

\begin{proof}
	Given that $\gama_L$ satisfies Equation \eqref{omp_hypothesis}, from \cite{WorkingLocallyThinkingGlobally} (Theorem 17) one obtains that 
	\begin{equation}
	\|\hat{\gama}_L - \gama_L\|^2_2 \leq \frac{\mathcal{E}_0^2}{1-\mu(\D^{(L)})(\|\gama_L\|^s_{0,\infty}-1)}.
	\end{equation}
	Moreover, if the OMP algorithm is run for $\|\gama_L\|_0$ iterations, then all the non-zero entries are recovered, i.e., $Supp(\hat{\gama}_L) = Supp({\gama}_L)$. Therefore, $\|\hat{\gama}_L - {\gama}_L\|^s_{0,\infty} \leq \| \gama_L\|^s_{0,\infty} = \lambda_L$.
	
	Now, let $\hat{\gama}_{L-1} = \D_{L}\hat{\gama}_L$. Regarding the support of $\hat{\gama}_{L-1}$, because $\gama_L$ satisfies the N.V.S. property, $\|\hat{\gama}_{L-1}\|_0 \leq \|\gama_{L-1}\|_0$. More so, all entries in $\hat{\gama}_{L-1}$ will correspond to non-zero entries in $\gama_{L-1}$. In other words,
	\begin{equation}
	Supp(\hat{\gama}_{L-1}) \subseteq Supp({\gama}_{L-1}).	
	\end{equation}
	
	Consider now the error at the $L-1$ layer, $\|\gama_{L-1}-\hat{\gama}_{L-1}\|_2^2$. Since $\| \gama_{L-1}-\hat{\gama}_{L-1} \|^s_{0,\infty} \leq \|\gama_{L-1}\|^s_{0,\infty}$, we can bound this error in terms of the Stripe RIP:
	\begin{equation}
	\|\gama_{L-1}-\hat{\gama}_{L-1}\|_2^2 = \|\D_L (\gama_{L}-\hat{\gama}_{L})\|_2^2 \leq (1+\delta_{\lambda_L})\|\gama_{L}-\hat{\gama}_{L}\|_2^2,
	\end{equation}
	We can further bound the SRIP constant as $\delta_k \leq (k-1)\mu(\D)$, from which one obtains 
	\begin{equation}
	\|\hat{\gama}_{L-1} - \gama_{L-1}\|^2_2 \leq \frac{\mathcal{E}_0^2 (1+(\|\gama_L\|^s_{0,\infty}-1)\mu(\D_L)) }{1-\mu(\D^{(L)})(\|\gama_L\|^s_{0,\infty}-1)}.
	\end{equation}
	From similar arguments, one obtains analogous claims for any $i^{th}$ layer; i.e., 
	\begin{multline}
	\|\hat{\gama}_{i} - \gama_{i}\|^2_2 \leq \frac{\mathcal{E}_0^2 }{1-\mu(\D^{(L)})(\|\gama_L\|^s_{0,\infty}-1)} \\ \prod_{j=i+1}^{L} (1+(\|\gama_j\|^s_{0,\infty}-1)\mu(\D_j)).
	\end{multline}
	
	This bound can be further relaxed for the sake of simplicity. Because $\|\gama_i\|^s_{0,\infty} < \frac{1}{2}\left(1+\frac{1}{\mu(\D_i)}\right)$, for $1\leq i \leq L$, then $(1+(\|\gama_L\|^s_{0,\infty}-1)\mu(\D_L)) < 3/2$, and so
	\begin{equation}
	\|\hat{\gama}_{i} - \gama_{i}\|^2_2 \leq \frac{\mathcal{E}_0^2 }{1-\mu(\D^{(L)})(\|\gama_L\|^s_{0,\infty}-1)} \left(\frac{3}{2}\right)^{L-i}.
	\end{equation}
\end{proof}

\subsection{Discussion on Theorem 6 and Theorem 7}
\label{app:DiscussionNVS}
In this section, we elaborate and comment further on the conditions impossed in the above theorems, regarding both the allowed sparsity and the N.V.S. property.
While the conditions of Theorems 6 and 7 might appear restrictive, the set of representations and dictionaries satisfying these conditions are not empty. An example of such constructions can be found in reference [17], where multi-layer overcomplete convolutional dictionaries are constructed by employing shifted versions of a discrete Meyer wavelet. This way, the resulting dictionaries have mutual coherence values in the order of $10^{-3}$ and $10^{-4}$, which provide ample room for sampling sparse representations satisfying the theorems assumptions. 

Regarding the NVS assumption, we stress that this is not as prohibitive as it might seem, and it is only needed because our theorems consider a deterministic worst-case scenario. Let us exemplify this better: consider a representation $\gama_2$ with 5 non-zero coefficients, and a dictionary $\D_2$ composed of atoms with 3 non-zeros each, uniformly distributed. If the entries in all non-zero coefficients are sampled from a normal distribution, the resulting inner representations $\gama_1 = \D_2\gama_2$ will have cardinalities in the range $[3, 15]$. If the mutual-coherence of $\D_1$ is such that the allowed maximal number of non-zeros per stripe (i.e., the $\ell_{0,\infty}$ norm) is, say, 7 (an assumption that is satisfied by the cases explained above), then this implies that the only signals that are allowed to exist are those \textbf{composed of atoms with some overlaps of their support}. The NVS assumption only implies that whenever these overlaps occur, they will not cancel each other. This, in fact, occurs with probability 1 if the non-zero coefficients are sampled from a Gaussian distribution.

We further depict this example in Figure \ref{app:Fig_NVS}. Note how the number of non-zeros in $\gamma_1$ is not allowed to be as large as possible (i.e., it is constrained to be below 7 by means of overlapping supports). The NVS property simply assumes that the coefficients multiplying $\d_2$ and $\d_3$ will not be such that the entry marked with red dotted line is zero. 

\begin{figure}[h]
	\centering
	\includegraphics[width = .45\textwidth]{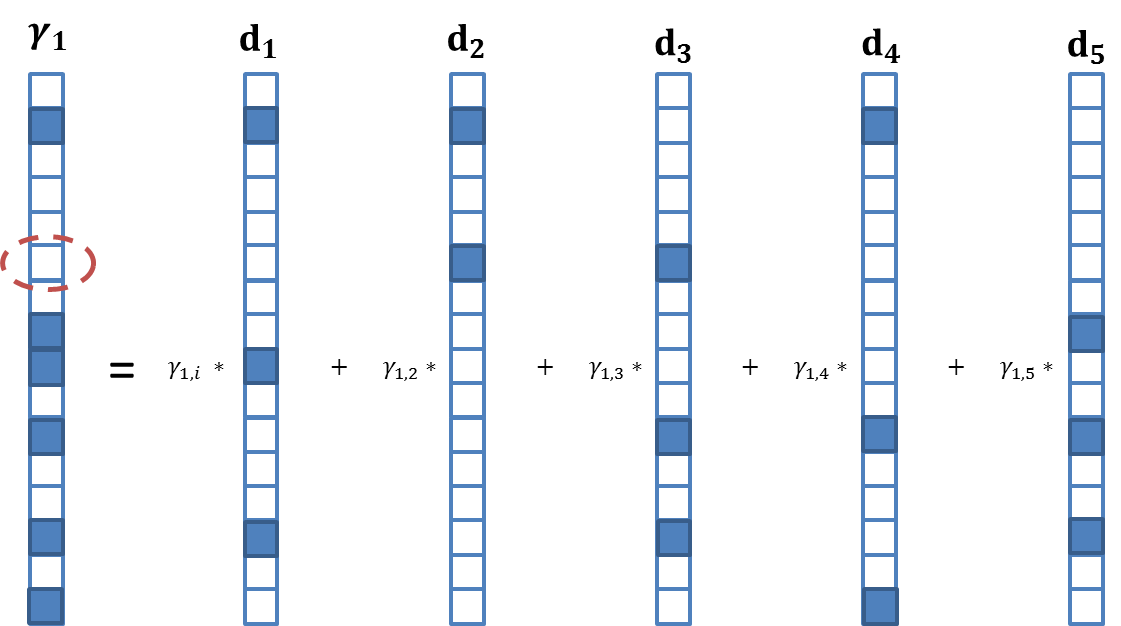}
	\caption{Illustration of the propagation of supports accross representations. See comments in the text.}
	\label{app:Fig_NVS}
\end{figure}

\subsection{Projecting General Signals}
\label{app:SketchProofProjection}
The method depicted in Algorithm \ref{Alg:ProjectionAlgorithm} can be shown to be a greedy approximation to an optimal algorithm, under certain assumptions, and we now provide a sketch of the proof of this claim. Consider the first iteration of the above method, where $k=1$. If OMP succeeds in providing the closest $\hat{\gama}_L$ subject to the respective constraint, i.e. providing the solution to 
\begin{equation}
\min_\gama \|\y - \D^{(L)} \gama\|^2_2 \text{ s.t. } \|\gama\|^s_{0,\infty} \leq 1,
\end{equation}
and if $\|\hat{\gama}_i\|^s_{0,\infty} \leq \lambda_i$ for every $i$, then this solution effectively provides the closest signal to $\y$ in the model defined by $\lamda = [\lambda_1, \dots, 1]$. If $\lambda_L = 1$, we are done.
Otherwise, if $\lambda_L > 1$, we might increase the number of non-zeros in $\hat{\gama}_L$, while decreasing the $\ell_2$ distance to $\y$. This is done by continuing to the next iteration: running again OMP with the constraint $\|\hat{\gama}_L\|^s_{0,\infty}\leq 2$, and obtaining the respective $\hat{\gama}_i$. 

At any $k^{th}$ iteration, due to the nature of the OMP algorithm, $Supp(\hat{\gama}^{k-1}_L) \subseteq Supp(\hat{\gama}^{k}_L)$. If all estimates $\hat{\gama}_i$ satisfy the N.V.S. property for the respective dictionaries $\D_i$, then the sparsity of each $\hat{\gama}_i$ is non-decreasing through the iterations, $\|\hat{\gama}^{k-1}_i\|^s_{0,\infty} \leq \|\hat{\gama}^{k}_i\|^s_{0,\infty}$. For this reason, if an estimate $\hat{\gama}^k_L$ is obtained such that any of the corresponding $\Loi$ constraints is violated, then necessarily one constraint will be violated at the next (or any future) iteration. Therefore, the algorithm outputs the signal corresponding to the iteration before one of the constraints was violated. A complete optimal (combinatorial) algorithm would need to retrace its steps and replace the last non-zero added to $\hat{\gama}^k_L$ by the second best option, and then evaluate if all constraints are met for this selection of the support. This process should be iterated, and Algorithm \ref{Alg:ProjectionAlgorithm} provides a greedy approximation to this idea.

\subsection{Sparse Dictionaries}
\label{app:SparseDictioanries}
\begin{customlemma}{3}{Dictionary Sparsity Condition} \\
	Consider the ML-CSC model $\M_\lamda$ described by the the dictionaries $\{\D_1\}_{i=1}^L$ and the layer-wise $\ell_{0,\infty}$-sparsity levels $\lambda_1,\lambda_2,\dots,\lambda_L$. Given $\gama_L : \|\gama_L\|^s_{0,\infty} \leq \lambda_L$ and constants $c_i = \Bigl\lceil \frac{2n_{i-1}-1}{n_i} \Bigr\rceil$, the signal $\x = \D^{(L)} \gama_L \in \M_\lamda$ if
	\begin{equation}
	\|\D_i\|_0 \leq \frac{\lambda_{i-1}}{\lambda_i c_i}, \quad \forall\ 1<i\leq L.
	\end{equation}
\end{customlemma}

\begin{proof}
	This lemma can be proven simply by considering that the patch-wise $\ell_{0,\infty}$ of the representation $\gama_{L-1}$ can be bounded by $\|\gama_{L-1}\|^p_{0,\infty} \leq \|\D_L\|_0 \|\gama_{L}\|^s_{0,\infty}$. Thus, if $\|\D_L\|_0 \leq \lambda_{L-1} / \lambda_{L}$ and $ \|\gama_{L}\|^s_{0,\infty} \leq \lambda_L$, then $\|\gama_{L-1}\|^p_{0,\infty} \leq \lambda_{L-1}$. Recalling the argument in \cite{Papyan2016convolutional} (Section 7.1), a stripe from the $i^{th}$ layer includes up to $c_i = \lceil (2n_{i-1}-1)/n_i \rceil$ patches. Therefore, $\|\gama_{L-1}\|^s_{0,\infty} \leq c_L \|\gama_{L-1}\|^p_{0,\infty}$, and so $\gama_{L-1}$ will satisfy its corresponding sparsity constraint if $\|\D_L\|_0 \leq \lambda_{L-1} / (c_L \lambda_{L} )$. Iterating this argument for the remaining layers proves the above lemma.
	
\end{proof}

\end{document}